\def\dataset{{\mathcal{D}}}
\def\servedataset{{\hat \dataset}}
\def\servedatasetclient{\dataset^{s}_i}
\def\E{\mathbb{E}}
\def\requestrate{\lambda^{a}}
\def\maxtransrate{\mu^{\mathrm{max}}}
\def\transrate{\lambda^{t}}
\def\fraction{f}
\def\optimprob{{\mathbb{P}}}
\def\serverate{\lambda^s}
\def\setnodes{\mathcal{N}}
\def\setleafnodes{\mathcal{L}}
\def\setchildnodes{\setnodes^-}
\def\setnodesexit{\mathcal{N}_e}
\def\model{\pmb{w}}
\def\reals{\mathbb{R}}
\newcommand{\set}[1]{\left\{#1\right\}}
\newcommand{\brackets}[1]{\left[#1\right]}
\newcommand{\parentheses}[1]{\left(#1\right)}
\newcommand{\norm}[1]{\left\lVert#1\right\rVert}
\newtheorem{lemma}{Lemma}
\DeclareMathOperator{\VarVec}{{\mathbb{V}ar}}
\newcommand\scalar[2]{\langle #1, #2 \rangle}
\newcommand{\ww}{\boldsymbol{w}}
\newcommand{\vv}{\boldsymbol{v}}
\newcommand{\LL}{\boldsymbol{\Lambda}}
\newcommand{\LLa}{\boldsymbol{\tilde\Lambda}}   %weight used at aggregation
\newcommand{\wtjce}{\ww^{(t,j)}_{i,e}} %model at client c when training exit e at round t and local step j
\newcommand{\Fce}{F_{i,e}}  %the empirical loss function at client c for exit e
\newcommand{\FSb}{F_{S,\LLa}} %the global empirical loss function targeted by our algorithm, potentially biased
\newcommand{\Fu}{F_{\mathcal{D},\vphantom{\LLa}\LL}}    %the actual goal
\newcommand{\Fb}{F_{\mathcal{D},\LLa}}    %the actual goal
\newcommand{\Ce}{\mathcal C_e} %set of clients who are going to train exit e, i.e. those for which \pce>0
\newcommand{\Sep}{S_{e,\boldsymbol{p}}} %union of all datasets of clients who train exit e
\newcommand{\FSe}{F_{\Sep}}
\newcommand{\bigO}{\mathcal{O}}
\newcommand{\VC}{\textrm{VCdim}}
\newcommand{\Fe}{F_{\mathcal{D},e}} %expected loss at exit e
\DeclareMathOperator{\diam}{diam}
\DeclareMathOperator{\tv}{dist_{TV}}
\DeclareMathOperator{\Pd}{Pdim}
\DeclareMathOperator{\Var}{Var}
\newtheorem{theorem}{Theorem}
\newtheorem{assumption}{Assumption}
\title{Federated Learning for Collaborative Inference Systems: \\The Case of Early Exit Networks}
\author {
    % Authors
    Caelin Kaplan\textsuperscript{\rm 1, \rm 2},
    Angelo Rodio\textsuperscript{\rm 2},
    Tareq Si Salem\textsuperscript{\rm 2, \rm 3},
    Chuan Xu\textsuperscript{\rm 2},
    Giovanni Neglia\textsuperscript{\rm 2}
}
\begin{document}

\maketitle

\begin{abstract}
As Internet of Things (IoT) technology advances, end devices like sensors and smartphones are progressively equipped with AI models tailored to their local memory and computational constraints. Local inference reduces communication costs and latency; however, these smaller models typically underperform compared to more sophisticated models deployed on edge servers or in the cloud. Collaborative Inference Systems (CISs) address this performance trade-off by enabling smaller devices to offload part of their inference tasks to more capable devices. These systems often deploy hierarchical models that share numerous parameters, exemplified by deep neural networks that utilize strategies like early exits or ordered dropout. In such instances, Federated Learning (FL) may be employed to jointly train the models within a CIS. Yet, traditional training methods have overlooked the operational dynamics of CISs during inference, particularly the potential high heterogeneity in serving rates across nodes. To address this gap, we propose a novel FL approach designed explicitly for use in CISs that accounts for these variations in serving rates. Our framework not only offers rigorous theoretical guarantees but also surpasses state-of-the-art training algorithms for CISs, especially in scenarios where end devices handle higher inference request rates and where data availability is uneven among nodes.
\end{abstract}

%%%%%%%%%%%%%%%%%%%%%%%%%%%%%%%%%%%%%%%%%%%%%%%%%%%%%%%%%%%%%%%%%%%%%%%%

\section{Introduction}
The integration of intelligent capabilities into devices such as sensors, smartphones, and IoT equipment is rapidly increasing~\cite{ren2023survey,campolo2023network}. Despite these advancements, a significant hurdle in this field is resource heterogeneity within real-world networks, where nodes often have varying memory and computational capacities. This disparity makes it infeasible to deploy a uniform AI model across all network nodes~\cite{lim2020federated,kairouz2021advances}. To address this issue, Collaborative Inference Systems (CISs) have been proposed~\cite{he2021,yang2022,salem2023toward,ren2023survey}, which allow less capable devices to offload a portion of their inference tasks to more powerful devices within the network. 
% thereby enhancing overall system performance, leveraging other available AI models to enhance overall performance.

While most existing research on CISs assumes these AI models are already trained and focuses on either optimizing their placement within networks and/or developing collaborative serving policies~\cite{li2019edge,zeng2019boomerang,salem2023toward,ren2023survey,jankowski2023}, significantly less attention has been given to training methodologies within a CIS. We address this gap by focusing on scenarios where models are collaboratively trained on distributed datasets hosted by the nodes (i.e., devices) that will later perform the inference tasks.

Federated Learning (FL)~\cite{mcmahan2017communication,li2020federated,kairouz2021advances} provides a framework for such collaborative training, enabling nodes to train machine learning models without sharing their local data. In FL, knowledge transfer among heterogeneous models can be achieved through explicit knowledge distillation---which typically requires a public dataset~\cite{lin2020ensemble, mora2022knowledge}---or by having the models share a subset of parameters. Within this latter approach, the most common method is to jointly train Deep Neural Networks (DNNs) that either share entire layers or specific parameters within a layer. Techniques like Ordered Dropout, which selectively drops parts of the network during training~\cite{diao2020heterofl, horvath2021fjord}, and Early Exit Networks, which allow models to make predictions at intermediate layers~\cite{teerapittayanon2016branchynet, teerapittayanon2017distributed}, can be used to customize models according to the varying memory and computational constraints of different nodes.

However, optimizing these shared parameters is challenging because different models may need distinct representations from the same layer to achieve optimal performance. For example, in an early exit network, a shallow model may need the layers just before its classifier to focus on classification, while a deeper model may instead rely on these layers to extract basic feature representations. A crucial consideration in this optimization process for CISs is the role each model plays during inference. Models handling a higher volume of inference requests should exert greater influence on the learning of shared parameters. This ensures that the most frequently requested models are better optimized, thereby enhancing overall inference performance.

Despite its importance, previous research has largely overlooked this unique challenge within CISs. Existing training methods, particularly those designed for distributed early exit networks~\cite{teerapittayanon2017distributed,nawar2023fed,ilhan2023scalefl}, treat all models equally,
failing to account for the heterogeneity in model capacities and performance.
Only a few studies have empirically suggested assigning weights based on model complexity~\cite{DBLP:conf/aaai/HuDHB19,kaya2019shallow}, but they still disregard the corresponding inference request rates.
%Only a few studies have empirically suggested assigning greater weight to either the worse-performing models~\cite{DBLP:conf/aaai/HuDHB19} or the better-performing ones~\cite{kaya2019shallow}, but they still disregard the corresponding inference request rates.
%assigning different importance to the worst-performing  performing better-performing models~\cite{DBLP:conf/aaai/HuDHB19, kaya2019shallow}, but disregard the highly realistic scenario of different inference request rates across the network. 
To bridge this gap, our paper introduces a %systematic, 
theoretically grounded FL training algorithm specifically designed to improve the overall CIS inference performance.

\paragraph{Contributions.} 
\begin{enumerate}
    \item We formalize the first inference-aware FL training framework for CISs, with the goal of maximizing overall inference accuracy. We define our objective function as a weighted sum of the expected losses for each model, where the weights, $\LL$, correspond to the expected future inference request rates.
    \item We propose a novel and practical inference-aware FL training algorithm designed for CISs. Our algorithm minimizes the weighted sum of empirical losses across nodes, using input weights $\LLa$ that may differ from the expected $\LL$. Moreover, it enables computationally stronger nodes to assist weaker ones in model training, according to predefined probabilities $\boldsymbol{p}$.
    \item We rigorously analyze the impact of the key parameters $\LLa$ and $\boldsymbol{p}$ on the generalization error, optimization error, and bias error, providing a deeper understanding of how these factors affect the overall training process and final inference performance. From this theoretical analysis, we derive practical configuration guidelines for our proposed training algorithm.
    \item We evaluate the effectiveness of our algorithm, showing that it significantly outperforms state-of-the-art methods, particularly in realistic scenarios where end devices handle higher inference request rates.
\end{enumerate}

\section{Background and Related Work}\label{sec:background}
In this section, we discuss the relevant background necessary to understand CISs, FL, and Early Exit Networks.

\subsection{Collaborative Inference Systems}\label{sec:cis-background}
Collaborative Inference Systems (CISs)~\cite{ren2023survey}, also known in the literature as Inference Delivery Networks \cite{salem2023toward}, enable smaller devices to offload part of their inference tasks to more capable devices, and represent an active field of study. The scope of collaboration in these systems may vary, extending beyond the traditional device-cloud model, to include intermediate nodes such as edge servers, regional clouds, or a collective of devices within direct transmission range of each other~\cite{teerapittayanon2017distributed, li2019edge, zeng2019boomerang, ren2023survey, salem2023toward}. 
% The form of collaboration within a CIS can also vary widely, from split computing frameworks, where a DNN is divided and processed across multiple nodes~\cite{matsubara2022split}, to ensemble approaches that leverage multiple models working together for improved inference accuracy~\cite{malka2022decentralized, yilmaz2022over}, or by selectively forwarding complex inference requests to more powerful models within the network~\cite{salem2023toward}.
While collaboration within a CIS can take many forms~\cite{matsubara2022split,malka2022decentralized, yilmaz2022over,salem2023toward}, in this paper, we focus on a hierarchical structure of nodes, each equipped with increasingly complex models that collaborate by forwarding inference requests to more powerful nodes within the network.
Although much previous work has focused on optimizing the deployment and utilization of already trained models in a CIS~\cite{li2019edge, zeng2019boomerang, salem2023toward,jankowski2023},
% and on privacy aspects~\cite{he2021,yang2022}
our research shifts focus to the less-explored challenge of \emph{training} these models in a FL context. 
% Here, each client in a CIS uses its local dataset to help train the models that will be deployed across the network.

\subsection{Federated Learning for a CIS} \label{sec:fl-background}
Traditional FL algorithms (e.g., FedAvg~\cite{mcmahan2017communication}, FedProx~\cite{li2020federatedprox}) typically assume that the participating nodes
% \footnote{We use the term ``nodes'' to refer to the devices participating in the Federated Learning process as well as any other entities within the network, including intermediary devices and servers.} 
have identical storage and computation capacities, meaning that each node holds a DNN of the same architecture and can perform an equal amount of computation during training. 
% Federated distillation techniques~\cite{lin2020ensemble} rely on a powerful orchestrator to distill knowledge across different models, but this approach requires access to a public dataset, limiting its practicality. An alternative approach involves jointly training models that share a subset of parameters, offering a more practical solution. 
However, recent algorithms have been developed to efficiently train multiple models of different sizes within a network, with the most practical approach being the joint training of models that share a subset of parameters.
For instance, FjORD~\cite{horvath2021fjord} introduces a framework where a DNN is pruned by channels to generate nested submodels of different sizes that can fit into heterogeneous nodes, following a mechanism known as ordered dropout. 
A similar idea is explored in HeteroFL~\cite{diao2020heterofl}. 
Alternative approaches involve the use of early exit networks~\cite{nawar2023fed} or a combination of these two methods~\cite{ilhan2023scalefl}. While our algorithm and analysis apply to both pruning (e.g., ordered dropout) and early exit strategies, we focus on early exit networks for clarity and concreteness. Early exit networks also offer the clearest example of collaborative inference, as weaker nodes forward intermediate representations to more powerful nodes, unlike in FjORD/HeteroFL, where the input is forwarded.

\subsection{Early Exit Networks}\label{sec:ee-background}
 \begin{figure}[h]
     \centering
     \includegraphics[clip,scale=0.25]{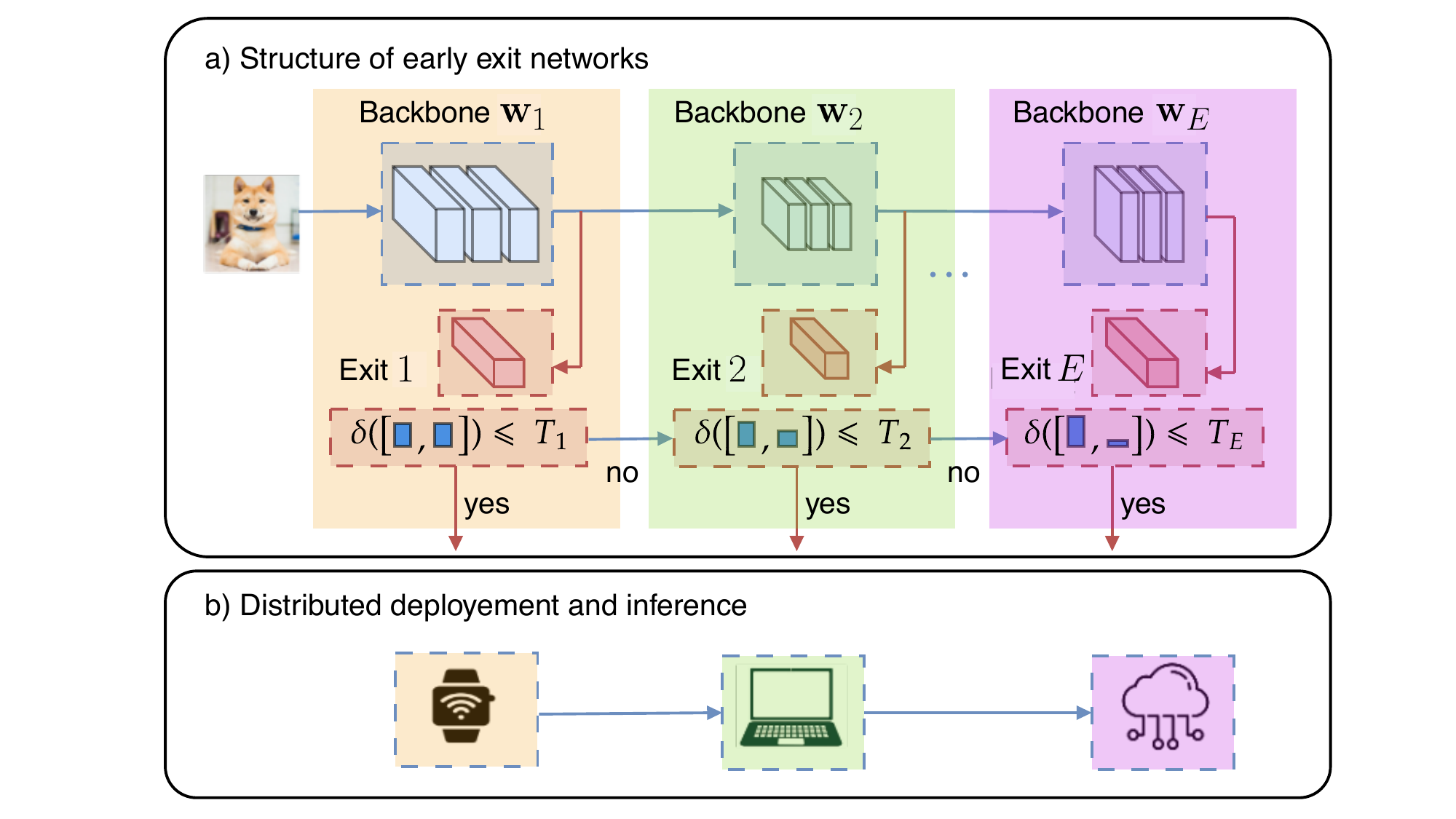}
     \caption{Early Exit Networks for Collaborative Inference System. An input sample is first passed through the initial layers of the DNN until it reaches Exit $1$. If the measure of prediction uncertainty is below the threshold $T_1$, the prediction is served at the current node. Otherwise, the intermediate representation of the current input is transferred to a node with greater computational capacity, and inference continues. This process repeats until the prediction uncertainty is below $T_e$ or the final Exit $E$ is reached. 
     }
     \label{fig:early_exit_classic}
 \end{figure}
Early Exit Networks (EENs), introduced initially  as BranchyNets~\cite{teerapittayanon2016branchynet}, extend DNNs by adding classifiers, or early exits, at intermediate layers. 
For instance, integrating an early exit into a standard ResNet-34~\cite{he2016deep} architecture might involve adding a classifier after the 8th residual block, thereby creating a smaller network within the original ResNet-34 that has the same depth as a ResNet-18.
The initial motivation for this design is to enable faster inference with less computational cost, which is especially useful in computationally heavy computer vision and natural language processing tasks~\cite[Table~4]{matsubara2022split}. 
Figure~\ref{fig:early_exit_classic}(a) details the inference process in standard EENs.
% : a prediction is served at an early exit if its confidence exceeds a certain threshold; otherwise, the intermediate representation is passed to subsequent exits for further processing. 
The typical training procedure involves minimizing the expected weighted loss across \emph{all} exits~\cite{teerapittayanon2016branchynet,DBLP:conf/iclr/HuangCLWMW18, li2019improved, DBLP:conf/aaai/HuDHB19, kaya2019shallow}:
\begin{equation}\label{eq:classic-earlyexit}
    \min_{\model\in \reals^d}\E_{z\sim \dataset}\brackets{\: \sum_{e\in\mathcal{E}} \alpha_e \: \ell^{(e)}(\ww,z)},
\end{equation}
where each data sample $z = \parentheses{x, y}$ is drawn from the data distribution $\dataset$, with $x$ as the input features and $y$ the corresponding target, $\model$ represents the EEN parameters, $\mathcal{E}=\{1,\dots,E\}$ is the set of early exits, $\ell^{(e)}$ the loss at the $e$-th exit, and $\alpha_e \in \reals_{\geq 0}$ is the weight assigned to $e$-th exit's loss. In a CIS, this process extends to a distributed setting where each node holds a model with its assigned exit and all earlier exits. During inference, the intermediate representation can be sent to more powerful nodes, as shown in Fig.~\ref{fig:early_exit_classic}(b). 

The weight coefficients $\alpha_e$ are crucial in determining each exit's ($e\in\mathcal{E}$) contribution to the overall model performance and can be assigned in various ways. Traditional training approaches generally assign equal weights to all exits~\cite{teerapittayanon2017distributed, DBLP:conf/iclr/HuangCLWMW18,nawar2023fed,ilhan2023scalefl}.
%, failing to account for their varying computational capabilities. 
We refer to these methods collectively as the ``Equal Weight'' strategy. 
Alternatively, more complex approaches have been considered that allocate weights in proportion to each exit's computational complexity, often measured in FLOPS, which results in assigning more weight to later exits~\cite[``Linear'' baseline]{kaya2019shallow,DBLP:conf/aaai/HuDHB19}. We refer to these methods collectively as the "FLOPS Prop" strategy. 
These existing approaches overlook the fact that inference request rates can vary significantly across real-world networks, leading to accuracy drops in likely scenarios where end devices (e.g., smartphones) with shallow models handle most of the requests. 
Our proposed method addresses this issue by systematically incorporating these varying request rates into the training process.

%%%%%%%%%%%%%%%%%%%%%%%%%%%%%%%%%%%%%%%%%%%%%%%%%%%%%%%%%%%%%%%%%%%%%%%%

% \section{Collaborative Inference System Using Federated Early Exit Networks}
\section{Federated Early Exit Networks for CISs}
\label{sec:algorithm-proposed}
In this section, we formalize the CIS problem~(Sec.~\ref{subsec:formulation}), present our collaborative training algorithm~(Sec.~\ref{subsec:algorithm}), provide theoretical convergence guarantees~(Sec.~\ref{subsec:theory}), and propose practical analysis-driven configuration rules~(Sec.~\ref{subsec:proposed-method}).
% This section is organized as follows. In Sec.~\ref{subsec:formulation}, we formalize the CIS problem and describe a training objective that takes into account inference requests. In Sec.~\ref{subsec:algorithm}, we present our algorithm for clients in a CIS to collaboratively train an EEN. In Sec.~\ref{subsec:theory}, we provide theoretical guarantees on the algorithm's convergence errors. Lastly, in Sec.~\ref{subsec:proposed-method}, we propose several practical configuration rules based on our theoretical analysis.

\subsection{Problem formulation}\label{subsec:formulation}

\paragraph{Network Topology.} In a CIS, the network is composed of a set of nodes $\setnodes = \set{1, 2, \dots, N}$, organized in a hierarchical tree structure such as a cloud-edge-device model~\cite{ren2023survey}, where parent nodes possess greater computational resources and memory than their child nodes. 
%Although we primarily focus on this topology, 
While we consider a tree topology for presentation purposes, 
we stress that 
the proposed algorithm (Sec.~\ref{subsec:algorithm}) and its theoretical guarantees (Sec.~\ref{subsec:theory}) are broadly applicable to any directed acyclic graph, regardless of where the most powerful nodes are positioned within the network.

Leaf nodes, which have no children, are represented by the set $\setleafnodes\subset \setnodes$, and each node $i$ has a set of child nodes, denoted by $\setchildnodes_i$. 
During \emph{training}, each node $i$ holds multiple early exits, up to a maximum exit $E_i\leq E$, with the constraint that $E_i>E_j, \forall j\in \setchildnodes_i$. However, during \emph{inference}, node $i$ utilizes only its largest exit $E_i$ to ensure the most accurate prediction. The set $\mathcal{N}_e$ denotes nodes that use early exit $e$ for inference, i.e.,~$\mathcal{N}_e=\{i\in \setnodes \mid E_i = e\}$.  
% In this paper, we will work with the rooted tree structure described above. However, we emphasize that our proposed algorithm in Sec.~\ref{subsec:algorithm}, along with the theoretical guarantees derived in Sec.~\ref{subsec:theory}, is broadly applicable to any directed acyclic graph. This includes any topology where a client can forward requests to one or more parent nodes, irrespective of whether those parent nodes store more powerful models. 

\paragraph{Real-time Inference Requests.}\label{sec:inference-requests}
\begin{figure}[H]
\centering
\includegraphics[width=0.43\textwidth]{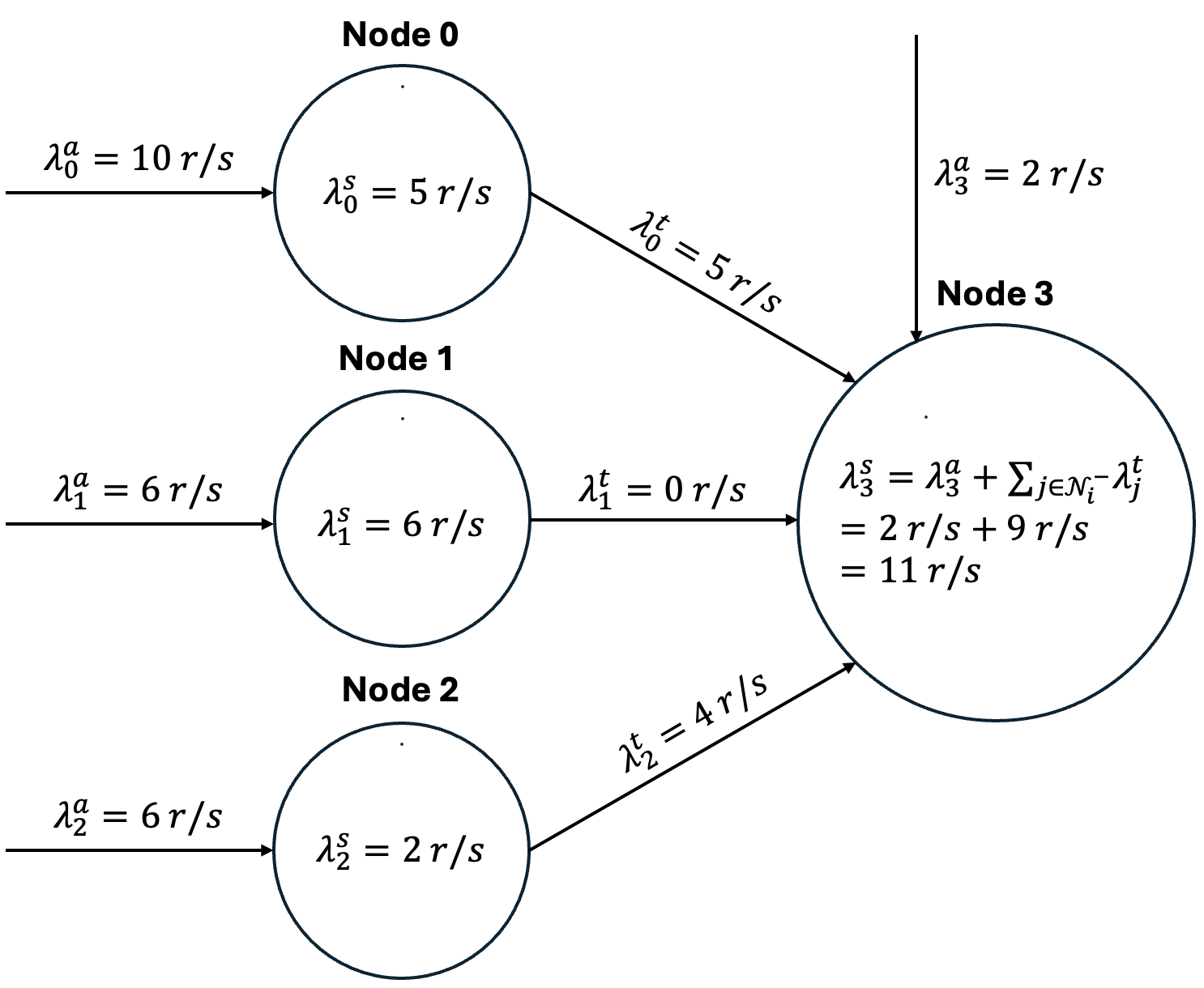}
\caption{
An example of a two-layer network with four nodes: Node~$0$, Node~$1$, and Node~$2$ each receive local requests, $\requestrate_i$ (in requests per second, r/s), serve a portion locally, $\serverate_{i}$, and transfer the remainder, $\transrate_{i}$, to their parent. Node~$3$ receives requests both locally and from its children, and serves all requests as it has no parent.
} 
\label{fig:inference-requests}
\end{figure}

Local inference requests arrive at each node~$i\in\setnodes$ with an arrival rate $\requestrate_i \in \reals_{\geq 0}$. A child node $i$ can transfer inference requests to its parent node with a transfer rate $\transrate_{i}$. 
% Let $\transrate_{i}$ be the rate of %forwarding 
% transmission rate of node $i$  to its parent. 
The total requests at node $i$ include both its local requests and those transferred from its children.
Each node $i$ then serves a fraction $\fraction_{i} \in [0,1]$ of these requests locally using its largest exit $E_i$, resulting in a serving rate $\serverate_{i}$:
%its inference model (e.g.,~early exit $E_i$ for the case of early exit networks). The rest of the requests are transmitted to its parent node.\footnote{For the case of early exit networks, since node $i$ has the same backbone as its parent, it may only forward the intermediate embedding instead of the original data.}
\begin{align}
    \textstyle\serverate_{i} \triangleq \left(\requestrate_i+ \sum_{j\in \setchildnodes_i} \transrate_{j}\right)\fraction_i, \label{eq:served}
\end{align}
while remaining requests are transferred to the parent node:
\begin{align}
    \transrate_{i} \triangleq  \textstyle \left(\requestrate_i+ \sum_{j\in \setchildnodes_i} \transrate_{j}\right)(1-\fraction_i). \label{eq:transmission}
\end{align}
Fig.~\ref{fig:inference-requests} presents a straightforward numerical example illustrating how a CIS manages inference requests.

The transfer rate $\transrate_{i}$ is constrained by an upper limit $\maxtransrate_{i}$, determined by the network's upstream bandwidth or the target inference delays.
Each node is aware of its maximum transfer rate $\maxtransrate_i$ and an estimate of its local arrival rate $\requestrate_i$.
Furthermore, nodes rank incoming samples by difficulty, allowing them to select the fraction $f_i$ of most favorable samples to serve locally~\cite{teerapittayanon2016branchynet, DBLP:conf/iclr/HuangCLWMW18,kaya2019shallow}.
The data distribution of these served samples at node $i$ is $\mathcal{D}^s_i$. 

\paragraph{Training objective for CISs.} 
The primary goal of training in a CIS is to minimize the total loss across all served samples throughout the network,  maximizing inference quality.
We formalize this objective as the first \emph{inference-aware} training framework for CISs using EENs, where the optimization problem is defined over the model parameters $\ww\in \mathcal{W}$ and the serving fractions $\{\fraction_i\}$ for each node:  
% To provide the best service in a CIS, one would expect that the final deployed models maximize the total inference quality, i.e., minimize the loss of all the \emph{served} samples over all the nodes.  
% With this objective in mind, we formalize the first \emph{inference-aware} training framework for a CIS using an EEN, where the aim is to solve the following optimization problem w.r.t. the model parameters $\ww\in W$ and the serving fractions $\{\fraction_i, \forall i\in \setnodes\}$.  
% \begin{align} %\label{eq:inference_aware_cen}
%         &\optimprob_1:&\phantom{} &\min_{\ww \in W, \{\fraction_i\}} \sum_{i \in \setnodes}\E_{z\sim \servedatasetclient}  \brackets{ \ell^{(E_i)}( \ww, z)} \serverate_i, \nonumber  &\\
%                 &\text{s.t.,} & & \transrate_i\leq \maxtransrate_{i},f_i \in [0,1], \quad &\forall i\in \setnodes, \label{eq:network_budget}\\
%         &&\phantom{} & \text{Eqs.}~\ref{eq:served} \text{ and }~\ref{eq:transmission}&\forall i\in \setnodes. \nonumber  
% \end{align}
\begin{align}
        &\optimprob_1:&\phantom{} &\min_{\ww \in \mathcal{W}, \{\fraction_i\}} \sum_{i \in \setnodes} \serverate_i ~\E_{z\sim \servedatasetclient}  \brackets{ \ell^{(E_i)}( \ww, z)}, \nonumber  &\\
                &\text{s.t.,} & & \transrate_i\leq \maxtransrate_{i},~f_i \in [0,1], ~\text{Eqs.}~\ref{eq:served} \text{ and }~\ref{eq:transmission}, \; \forall i\in \setnodes. \label{eq:network_budget}  
\end{align}
% Existing studies~\cite{teerapittayanon2016branchynet, bacca_distributed, zeng2019boomerang} have shown that early exits placed deeper within a network generally achieve better accuracy during inference. Based on this observation, 
% smaller nodes should aim to offload as many inference requests as possible to their parent nodes to maximize the inference accuracy. 
Building on existing research that shows deeper early exits typically yield higher inference accuracy~\cite{teerapittayanon2016branchynet, zeng2019boomerang, bacca_distributed}, we observe that smaller nodes should prioritize offloading requests to their parent nodes.\footnote{We illustrate this point by contradiction: suppose in the optimal solution of $\optimprob_1$, there exists a node with $\fraction^*_i > 0$ and $\transrate_i < \maxtransrate_i$. By decreasing $\fraction^*_i$ to $0$ or to a value that makes $\transrate_i=\maxtransrate_i$ and adjusting $f^*_j$ s.t. parent node $j$ serves these additional requests locally, we achieve another feasible solution to $\optimprob_1$ with a smaller loss.}
% Node $i$ either forwards all the requests to its parent  $\left( \transrate_{i} =  \requestrate_i+ \sum_{j\in \setchildnodes_i} \transrate_{j} \right)$ or reaches $\transrate_i = \maxtransrate_i$.
% Existing works~\cite{teerapittayanon2016branchynet, bacca_distributed, zeng2019boomerang} demonstrate that early exits later in the network typically result in better accuracy performance on the inference data. Based on this observation, to ensure the best quality of inference, smaller nodes should offload as many requests as possible to their parent nodes.\footnote{Node $i$ either forwards all the requests to its parent  $\left( \transrate_{i} =  \requestrate_i+ \sum_{j\in \setchildnodes_i} \transrate_{j} \right)$ or reaches $\transrate_i = \maxtransrate_i$.} A contradiction illustrates this point: suppose in the optimal solution of $\optimprob_1$, there exists a node with $\fraction^*_i > 0$ and $\transrate_i < \maxtransrate_i$. By decreasing $\fraction^*_i$ to $0$ or to a value that makes $\transrate_i=\maxtransrate_i$ and adjusting $f^*_j$ s.t. the parent node $j$ serves these additional requests locally, we achieve another feasible solution to $\optimprob_1$ with a smaller loss. 
This allows us to simplify the optimization problem $\optimprob_1$ by restricting the search space to strategies that prioritize offloading, resulting in an equivalent optimization problem, $\optimprob_2$, which focuses on minimizing losses at early exits:
\begin{align}\label{eq:inference_aware_cen_sim}
        &\optimprob_2:&\phantom{} &\min_{\ww \in \mathcal{W}} \sum_{e\in\mathcal{E}} \Lambda_e ~\E_{z\sim \servedataset_e}   \brackets{ \ell^{(e)}(\ww, z)}, 
\end{align}
where $\Lambda_e \triangleq \sum_{i\in \setnodesexit} \parentheses{  \requestrate_i + \sum_{j \in \setchildnodes_i} \transrate_j  - \transrate_{i}}$ is the total serving rate of all nodes using exit $e$ at inference time, and
the data distribution of serving samples at early exit $e$ is $\servedataset_e$.\footnote{In $\optimprob_2$, the transfer rates are given by the recurrence relation $\transrate_i = 
\min\set{\maxtransrate_i, \requestrate_i + \sum_{j \in \setchildnodes_i} \transrate_j}$.} 

In $\optimprob_2$, the serving rates $\Lambda_e$ are constant, depending only on the arrival rates $\requestrate_i$ and the maximum
transfer rates $\maxtransrate_i$.
Before training begins, the cloud can collect this information from all nodes to compute the serving rates $\Lambda_e$.

\subsection{Federated Learning Algorithm Dissection}
\label{subsec:algorithm}
\begin{algorithm}
\caption{Federated Learning for Distributed EENs}
\begin{algorithmic}[1]
\STATE{\textbf{Input:} a randomized initial model $\ww_1$, total communication rounds $T$, local steps $J$, global learning rate $\eta_s$, local learning rates $\{\eta^{(t,j)}\}$ at round $t$ and local step $j$, sampling matrix $\boldsymbol{p}$, aggregation weights $\LLa$.}
\FOR{$t = 1$ to $T$}
    \STATE Server samples the set $\mathcal{N}^{(t)}$ of node/exit pairs w.r.t.~$\boldsymbol{p}$.
    \STATE Server broadcasts the model $\ww^{(t)}$ to all nodes in $\mathcal{N}^{(t)}$.
    \FORALL{$(i,e) \in \mathcal{N}^{(t)}$ in parallel}
        \STATE $\ww^{(t,0)}_{i,e} = \ww^{(t)}$
        \FOR{$j = 0$ to $J-1$}
            \STATE Node $i$ selects a random batch $\mathcal{B}_i$\STATE{$$\hspace{-0.6cm}\ww^{(t,j+1)}_{i,e} = \wtjce - \eta^{(t,j)} \frac{1}{\left| \mathcal{B}_i \right|} \sum_{z \in \mathcal{B}_i}\nabla \ell^{(e)}(\wtjce, z)$$}
        \ENDFOR
        \STATE Node $i$ sends $\ww^{(t,J)}_{i,e}$ to the server
    \ENDFOR
    \STATE The server updates its global model
    \STATE $$\ww^{(t+1)} = \Pi_\mathcal{W} \left(\ww^{(t)} + \eta_{s} \sum_{(i,e) \in \mathcal{N}^{(t)}}  \frac{\tilde{\Lambda}_e |S_i|}{|\Sep|} \frac{g^{(t,J)}_{i,e}}{p_{i,e}}\right),$$ \\ \hspace{1.2cm} \text{where } $g^{(t,J)}_{i,e} = (\ww^{(t,J)}_{i,e} - \ww^{(t)})$ \label{alg-line:aggregation}
    % \STATE $$\ww_{t+1} = \mathrm{Proj}_W \left(\ww_{t} + \eta_{s} \sum_{(c,e) \in \mathcal{N}_t}  \frac{\tilde{\Lambda}_e |S_c|}{|\Sep|} \frac{ (\ww_{t,J}^{(c,e)} - \ww^{(T)})}{p_{i,e}}\right)$$
\ENDFOR
\STATE{return $\ww^{(T)}$}
\end{algorithmic}
\label{alg:fed-cis}
\end{algorithm}

We propose a FL algorithm that enables network nodes to collaboratively train an EEN for Problem~$\optimprob_2$ using their local datasets. Each node $i$ is designed to hold all exits up to its largest exit $E_i$. Although node $i$ only uses exit $E_i$ for inference, it can still play a crucial role in training smaller exits, particularly when it owns a substantial amount of data.
%it can be beneficial for it to contribute towards training the other exits smaller than $E_i$, especially when the node $i$ has a large amount of data. 
At each communication round, the server follows a two-step sampling process: first, it samples a set of nodes to participate in training, as in traditional FL algorithms; then, it selects a specific early exit for each chosen node to train.
The probability that a node $i$ is selected to train a particular early exit $e$ is denoted by $p_{i,e}$, while $\boldsymbol{p}\in \reals^{N\times E}$ represents the overall probability matrix.
The set of nodes with a non-zero probability of training exit~$e$ is $\Ce \triangleq \{i\in\setnodes \mid p_{i,e}>0 \}$, and the set of all samples from nodes in $\Ce$ is $\Sep \triangleq \cup_{i \in \Ce} S_i$, where node $i$ holds samples $S_i$.

Our FL algorithm aims to minimize a proxy of the objective in~$\optimprob_2$, where the expected loss at each early exit $e$ is replaced by the empirical loss computed on the dataset $\Sep$. 
Rather than strictly matching the weight $\tilde \Lambda_e$ to the expected inference request rate $\Lambda_e$, we adopt a more flexible training strategy that allows them to differ.
% However, we adopt a more flexible training strategy by allowing the weight $\tilde \Lambda_e$ assigned to each early exit's empirical loss to differ from the expected inference request rate $\Lambda_e$. 
This choice is supported by our theoretical results in Sec.~\ref{subsec:theory}. However, even without the analysis, it is evident that when exit $e$ has a high inference request rate $\Lambda_e$ but limited data $|\Sep|$, the empirical loss may be too noisy, making it preferable to set \mbox{$\tilde \Lambda_e \ll \Lambda_e$}.

% Since clients may have heterogeneous amount of data, there can sometimes be a huge discrepancy among the sizes of $\Sep$. For an exit $e$ that has a high inference request rate ($\Lambda_e$) but a small $|\Sep|$, keeping $\Lambda_e$ large in the training of early exit $e$'s loss would lead to a large generalization error. Thus, in our algorithm, we study a more general training strategy where the weight for each early exit's empirical loss $\tilde \Lambda_e$ can be distinct from $\Lambda_e$, and later provide the theoretical results for this setting.

Our algorithm (Alg.~\ref{alg:fed-cis}) works as follows: 
At each communication round $t$, the server samples nodes and their corresponding early exits based on the probability matrix $\boldsymbol{p}$ (Lines~2-3). 
The server then broadcasts the current global model to the sampled nodes (Line~4). 
Each node $i$ performs multiple steps of mini-batch gradient descent on the loss associated with its sampled early exit $e$, and returns the updated model to the server (Lines~5-10).
The server aggregates these updates by computing a weighted sum of the pseudo-gradients from each node-exit pair $(i,e)$ (Lines~10-12). Each pair's weight is determined by three key factors: (i) the importance $\tilde \Lambda_e$ assigned to exit $e$; (ii) the proportion of the dataset that node $i$ used to train relative to the total dataset used to train exit $e$ ($\frac{|S_i|}{|\Sep|}$); and (iii) the inverse of the probability that node $i$ was selected to train exit $e$ ($\frac{1}{p_{i,e}}$). 

\subsection{Theoretical Results}\label{subsec:theory}
Our analytical results assume that the serving distribution of every exit $e$ is the same, i.e.,~$\hat{\mathcal{D}}_e=\mathcal{D}, \forall e$. 
Let $\ww^{(T)}$ be the output of Alg.~\ref{alg:fed-cis}, $F_{\mathcal{D}, \LL}(\ww^{(T)})$ be the corresponding expected loss in Eq.~\eqref{eq:inference_aware_cen_sim} that we aim to minimize, and $\Fu^\star$ be its minimum value.
%the optimal loss for Problem~(2). 
In this section, we provide an upper-bound for the difference between  $F_{\mathcal{D}, \LL}(\ww^{(T)})$ and $\Fu^\star$.
More precisely, we investigate the \emph{true error} of the algorithm:
\begin{equation}
    \label{eq:true_error}
   \epsilon_{\text{true}} \triangleq \mathbb{E}_{S, A_{\LLa}}\Big[\Fu\Big(\ww^{(T)}\Big)\Big] - \Fu^\star, 
\end{equation}
where $A_{\LLa}$ is our algorithm and $S$ is the union of the nodes' datasets drawn from $\mathcal{D}$. We first list the assumptions needed for our results, denoting node $i$'s empirical loss on early exit $e$ of model~$\ww$ as $\Fce(\ww)$, i.e.,~$\Fce(\ww)\triangleq\frac{1}{|S_i|} \sum_{z \in S_i} \ell^{(e)}(\ww,z)$. We can see from our aggregation rule that Alg.~\ref{alg:fed-cis} is minimizing 
$\FSb(\ww) \triangleq \sum_{e\in E} \tilde{\Lambda}_e \sum_{i \in \Ce} \frac{|S_i|}{\sum_{i \in \Ce}  |S_i|} \Fce(\ww)$. Let $\ww_{i,e}^{\star}$, $\ww^{\star}_{\LLa}$, and $\ww_\mathcal{D}^{\star}$ be the minimizers of~$\Fce$, $\FSb$, and $\Fu$, respectively.

\begin{assumption}
    \label{asm:bounded_loss}
    (Bounded loss)
    The loss function is bounded, i.e., $\forall \ww \in \mathcal{W} 
    \text{ and } z\in\mathcal{Z},~\ell(\ww, z) \in [0, M]$.
\end{assumption}

\begin{assumption}\label{asm:hypothesis_class}
The hypothesis space $\mathcal{W}\subset \mathbb R^d$ is convex and compact with diameter $\diam(\mathcal{W})$, and contains the minimizers $\bm{w}_{i,e}^{\star}$, $\bm{w}_{\LLa}^{\star}$ and $\bm{w}_\mathcal{D}^{\star}$ in its interior.
\end{assumption}

\begin{assumption} \label{asm:smooth}
	$\{ \Fce \}_{(i,e) \in \mathcal{N}\times\mathcal{E}}$ are $L$-smooth:
	for all $\bm{v}$ and $\ww$ in $\mathcal{W}$, $\norm{\nabla \Fce(\vv) - \nabla \Fce(\ww)}_2 \leq L \norm{\bm{v} - \ww}_2$.
\end{assumption}

\begin{assumption} \label{asm:strong_cvx}
	$\{ \Fce \}_{(i,e) \in \mathcal{N}\times\mathcal{E}}$ are $\mu$-strongly convex:
	for all $\bm{v}$ and $\ww$ in $\mathcal{W}$, $\Fce(\bm{v})  \geq \Fce(\ww) + \scalar{\nabla \Fce(\ww)}{\bm{v} - \ww} + \frac{\mu }{2} \norm{\bm{v} - \ww}_2^2$.
\end{assumption}

\begin{assumption} \label{asm:sgd_var}
	Let $\mathcal{B}_i$ be a random batch sampled from the $i$-th node's local data uniformly at random.
	The variance of stochastic gradients in each node is bounded: $\E \norm{\nabla \Fce(\ww, \mathcal{B}_i) - \nabla \Fce(\ww)}^2 \le \sigma_{i,e}^2$ for all $\ww$ in $\mathcal{W}$ and $(i,e) \in \mathcal{N}\times\mathcal{E}$.
\end{assumption}

Assumption~\ref{asm:bounded_loss} is standard in statistical learning theory~(e.g.,~\citet{MohriRostamizadehTalwalkar18,shalev_shai}), while Assumptions~\ref{asm:hypothesis_class}--\ref{asm:sgd_var} are standard in the analysis of federated optimization algorithms (e.g.,~\citet{wangFieldGuideFederated2021,liConvergenceFedAvgNonIID2023,rodio23ton}).
We observe that Assumptions~\ref{asm:hypothesis_class}, \ref{asm:smooth}, and~\ref{asm:sgd_var} jointly imply that the stochastic gradients are bounded. We denote this bound by $G$, i.e., $\E \norm{\nabla \Fce(\ww, \mathcal{B}_i) }^2  \le G^2$ for $\ww \in \mathcal{W}$ and $(i,e) \in \mathcal{N}\times\mathcal{E}$.

Theorem~\ref{thm:convergence} provides an upper bound on the true error of our algorithm in terms of the sum of three components: a generalization error, a bias error (due to the mismatch between $\Fb$ and $\Fu$), and an optimization error. The proof is provided in the Technical Appendix.
\begin{theorem}
\label{thm:convergence}
Under Assumptions~\ref{asm:bounded_loss}--\ref{asm:sgd_var}, the true error of the output $\ww^{(T)}$ of Alg.~\ref{alg:fed-cis} with learning rate $\eta^{(t,j)}= \frac{2}{\mu (\gamma + (t-1)J+j+1)}$ and $\gamma\triangleq\max\{8 \kappa, J\}-1$ can be bounded as follows:
\begin{align}
\epsilon_{\text{true}} & \le \underbrace{\bigO \left(\sum_{e=1}^E \tilde{\Lambda}_e \sqrt{\frac{\Pd(H_e)}{{|\Sep|}}} \right)}_{\epsilon_{\text{gen}}}
 + \underbrace{\bigO\left(\tv(\LLa,\LL)\right)}_{\epsilon_{\text{bias}}}\nonumber\\
& + \underbrace{\bigO\left(\frac{B(\LLa, \bm{p},\bm{\sigma}, \{|S_{i}|\}_{(i,e) \in \mathcal{N}\times\mathcal{E}})}{J\times T}\right)}_{\epsilon_{\text{opt}}}, 
\end{align}
where $\kappa \triangleq \frac{L}{\mu}$, $Pdim(H_e)$ represents the pseudo-dimension of the class of models for exit $e$, $\tv$ is the total variation distance, $\LL = (\Lambda_1,\dots,\Lambda_E)$, $\LLa = (\tilde\Lambda_1,\dots,\tilde\Lambda_E)$, and 
the expression of $B(\cdot)$ is provided in the Technical Appendix.
\end{theorem}

\subsection{Configuration Rules}\label{subsec:proposed-method}
%A detailed proof is available in the Technical Appendix. 
Theorem~\ref{thm:convergence}
shows that the choice of aggregation weights~$\LLa$ in Alg.~\ref{alg:fed-cis} (Line~\ref{alg-line:aggregation}) affects all three error components: generalization error, optimization error, and bias error---each minimized by a different choice of $\LLa$. 

The bias error $\epsilon_{\text{bias}}$ is dominant when each exit $e$ is trained on a large dataset $\Sep$ (making $\epsilon_{\text{gen}}$ small) and 
the number of communication rounds~$T$ is high (making $\epsilon_{\text{opt}}$ small). In such settings, the optimal strategy sets the aggregation weights $\LLa$ equal to the expected serving rates $\LL$. We refer to this configuration rule as ``Serving Rate'', which effectively eliminates the bias error, as $\tv(\LLa,\LL)=0$. However, optimization and generalization errors can also play a significant role. In these cases, deviating $\LLa$ from $\LL$ may reduce these errors, though it introduces a non-zero bias error.

The optimization error $\epsilon_{\text{opt}}$ is strongly influenced by the gradient variance $\sigma_{i,e}^2$ at each early exit $e$, as shown by the $B(\cdot)$ term, whose complete expression can be found in the Technical Appendix. Empirical evidence shows that gradient variance is significantly higher at the initial exits compared to the later ones, making the optimization error especially sensitive to the stochastic gradients produced at these early stages.\footnote{Experimental evidence is provided in the Technical Appendix.} To reduce $\epsilon_{\text{opt}}$, earlier exits with higher variance should be assigned lower aggregation weights $\tilde{\Lambda}_e < {\Lambda}_e$ to lessen their impact during training. In scenarios where the optimization error dominates, it follows that minimizing $\epsilon_{\text{opt}}$ involves setting the aggregation weights $\tilde \Lambda_e$ inversely proportional to the gradient variance $\sigma_{i,e}^2$. We observe that this approach alters the weights in the same direction as the ``FLOPS Prop'' strategy (described in Sec~\ref{sec:ee-background}), which also assigns larger weights to more powerful models.
% It follows that devices with higher computational power, measured in FLOPS, can process larger mini-batches during training, leading to more accurate gradient estimates with lower variance. 
% This strategy practically aligns the aggregation weights with the computational capacity of the devices, hence our choice to label this configuration rule as ``FLOPS Prop''.

The generalization error, on the other hand, is affected by the ratio $\Pd(H_e)/|\Sep|$. In practice, $\Pd(H_e)$ acts as a proxy for the complexity of the model at exit $e$. It follows that exits with a larger model (i.e., larger $\Pd(H_e)$), and smaller dataset (i.e., smaller $|\Sep|$), contribute more to this error component, and thus reducing the aggregation weights associated to these exits minimizes $\epsilon_{\text{gen}}$. In extreme cases where the generalization error is dominant, the optimal strategy requires setting aggregation weights to zero for all exits except the one with the lowest complexity ratio $\Pd(H_e)/|\Sep|$. The probabilities $p_{i,e}$ can also play a role in further reducing the generalization error, whereby powerful nodes periodically train exit~$e$, practically increasing the sample size $\Sep$ and leading to a reduced $\epsilon_{\text{gen}}$.
% Larger values of $p_{i,e}$ allow other nodes to periodically train exit~$e$, practically increasing the sample size $\Sep$ and therefore helping to reduce $\epsilon_{\text{gen}}$.

In many realistic scenarios, it is likely that no single error component is dominant, and one might consider configuring our FL algorithm by minimizing the entire bound in Theorem~\ref{thm:convergence}. However, this approach is often impractical due to the complexities involved in estimating theoretical parameters, such as the Lipschitz constant $L$ and the strong convexity constant $\mu$. To address this issue, our experimental findings suggest that a hybrid strategy, which balances the reduction of both bias and optimization errors, offers robust performance across many settings. For the remainder of this paper, we refer to this heuristic approach as ``Balanced Adj'', where the abbreviation ``Adj'' stands for Adjustment.

\section{Experiments}\label{sec:experiments}
In this section, we present experimental results that validate our theoretical analysis in Sec.~\ref{subsec:theory} and  highlight the versatility of our algorithm across various CIS serving rate settings.
% In this section, we present experimental results that corroborate our qualitative remarks in Sec.~\ref{subsec:theory} and demonstrate the flexibility of our proposed algorithm by comparing it with other training methods across different CIS serving rate settings.
% and client dataset size distributions for the cloud-edge-device network topology described in Sec.~\ref{sec:cis-background}. 

\subsection{Training Details}
We conduct experiments on the CIFAR10 and CIFAR100 datasets, employing the ResNet-18 model architecture~\cite{he2016deep}. Both datasets and model are widely used to benchmark FL algorithms in the presence of device heterogeneity and EENs~\cite{li2019improved,DBLP:conf/aaai/HuDHB19,kaya2019shallow,diao2020heterofl,horvath2021fjord,ilhan2023scalefl}. 
We insert early exits after the 2nd and 5th residual blocks for CIFAR10 and after the 5th and 7th residual blocks for CIFAR100.
% \footnote{These specific insertion points were chosen such that they provide a range of possible model accuracy and computational cost settings.} 
For reproducibility, all dataset details, training infrastructure, and hyperparameters are provided in the Technical Appendix.

\subsection{Evaluation Methodology}

\begin{table*}[htbp]
\centering
\caption{Experimental results for a variety of CIS serving rates on the CIFAR10 and CIFAR100 datasets using an equal data partition across the network layers. All reported accuracy values are the mean value over three independent random seeds.}
\begin{tabular}{
  l  % left-aligned text column for "Dataset"
  l  % left-aligned text column for "Strategy"
  c  % numeric column for "5-15-80"
  c  % numeric column for "10-30-60"
  c % numeric column for "20-35-45"
  c  % numeric column for "33-33-33"
  c  % numeric column for "45-35-20"
  c  % numeric column for "60-30-10"
  c % numeric column for "80-15-5"
}
\toprule
\multicolumn{2}{c}{} & \multicolumn{7}{c}{CIS Serving Rate Setting} \\
\cmidrule(lr){3-9}
Dataset & Strategy & {80-15-5} & {60-30-10} & {45-35-20} & {33-33-33} & {20-35-45} & {10-30-60} & {5-15-80} \\
\midrule
\multirow{3}{*}{CIFAR10} 
  & Equal Weight  & 49.9 \scriptsize{$\pm$ 1.2} & 60.6 \scriptsize{$\pm$ 0.9} & 68.9 \scriptsize{$\pm$ 0.6} & \textbf{74.9} \scriptsize{$\pm$ 0.2} & 80.4 \scriptsize{$\pm$ 0.2} & 83.8 \scriptsize{$\pm$ 0.3} & 85.1 \scriptsize{$\pm$ 0.4} \\
  & FLOPS Prop    & 32.1 \scriptsize{$\pm$ 3.7} & 47.3 \scriptsize{$\pm$ 2.9} & 58.5 \scriptsize{$\pm$ 2.3} & 67.3 \scriptsize{$\pm$ 1.7} & 76.4 \scriptsize{$\pm$ 1.1} & 83.2 \scriptsize{$\pm$ 0.7} & 86.3 \scriptsize{$\pm$ 0.6} \\
  & Serving Rate (ours)  & \textbf{54.2} \scriptsize{$\pm$ 2.4} & \textbf{62.6} \scriptsize{$\pm$ 1.5} & \textbf{69.2} \scriptsize{$\pm$ 1.3} & \textbf{74.9} \scriptsize{$\pm$ 0.2} & \textbf{80.9} \scriptsize{$\pm$ 0.2} & 84.6 \scriptsize{$\pm$ 0.5} & 86.7 \scriptsize{$\pm$ 0.5} \\
  & Balanced Adj (ours) & 53.4 \scriptsize{$\pm$ 2.2} & 61.1 \scriptsize{$\pm$ 1.2} & \textbf{69.2} \scriptsize{$\pm$ 0.7} & \textbf{74.9} \scriptsize{$\pm$ 0.3} & 80.4 \scriptsize{$\pm$ 0.5} & \textbf{85.0} \scriptsize{$\pm$ 0.2} & \textbf{87.6} \scriptsize{$\pm$ 0.5} \\
\addlinespace  % Adds space to visually separate the partitions
\hline
\addlinespace  % Adds space to visually separate the partitions
\multirow{3}{*}{CIFAR100} 
  & Equal Weight  & 39.8 \scriptsize{$\pm$ 1.2} & 45.9 \scriptsize{$\pm$ 0.9} & 51.0 \scriptsize{$\pm$ 0.6} & \textbf{55.2} \scriptsize{$\pm$ 0.3} & \textbf{58.3} \scriptsize{$\pm$ 0.2} & 60.3 \scriptsize{$\pm$ 0.2} & 61.1 \scriptsize{$\pm$ 0.3} \\
  & FLOPS Prop    & 30.4 \scriptsize{$\pm$ 0.7} & 40.0 \scriptsize{$\pm$ 0.6} & 48.3 \scriptsize{$\pm$ 0.4} & 53.3 \scriptsize{$\pm$ 0.0} & 58.0 \scriptsize{$\pm$ 0.1} & \textbf{60.9} \scriptsize{$\pm$ 0.1} & \textbf{62.1} \scriptsize{$\pm$ 0.1} \\
  & Serving Rate (ours)  & 45.0 \scriptsize{$\pm$ 0.7} & \textbf{50.2} \scriptsize{$\pm$ 0.7} & \textbf{53.0} \scriptsize{$\pm$ 0.8} & \textbf{55.2} \scriptsize{$\pm$ 0.3} & 53.2 \scriptsize{$\pm$ 1.0} & 56.2 \scriptsize{$\pm$ 0.1} & 57.9 \scriptsize{$\pm$ 0.5} \\
  & Balanced Adj (ours)  & \textbf{46.6} \scriptsize{$\pm$ 1.2} & 49.5 \scriptsize{$\pm$ 0.8} & 52.1 \scriptsize{$\pm$ 0.6} & 54.8 \scriptsize{$\pm$ 0.3} & 57.4 \scriptsize{$\pm$ 0.2} & 59.3 \scriptsize{$\pm$ 0.8} & 60.7 \scriptsize{$\pm$ 0.2} \\ 
\bottomrule
\end{tabular}
\label{tab:equal-weight-results}
\end{table*}

\paragraph{Baselines.}
Our work represents the first attempt to develop a FL training algorithm for use within a CIS.
Due to the lack of established baselines for direct comparison, we compare our approach to SOTA algorithms proposed to train traditional EENs, focusing on those that have a straightforward application to FL and CIS settings (see Sec.~\ref{sec:ee-background} for a comprehensive description of these methods). 
The two strategies in this category are: (i) ``Equal Weight,'' which assigns equal weight to all early exits~\cite{teerapittayanon2017distributed,DBLP:conf/iclr/HuangCLWMW18}, and (ii) ``FLOPS Prop,'' which weights the exits according to their FLOPS~\cite{kaya2019shallow}. While other centralized training methods, such as those proposed by~\citet{DBLP:conf/aaai/HuDHB19,li2019improved}, could potentially be adapted for our purposes, their extension is less straightforward and would require extra computation by the nodes.  We also implement the (iii) ``Serving Rate'' and (iv) ``Balanced Adj'' strategies, both directly derived from our analysis in Sec.~\ref{subsec:theory}.
The code for our experimental framework is in the Supplementary Material.

% to the two methods mentioned above and our proposed heuristics in Sec.~\ref{subsec:proposed-method}.

% As our work is the first to propose a FL algorithm in the context of a CIS, there are not existing baselines to compare against. Nonetheless, in order to benchmark our approach, we can look at SOTA algorithms proposed to train traditional early exit networks and consider those that have a straightforward application to FL and CIS settings (see Sec.~\ref{sec:ee-background} for details on existing early exit training strategies). 
% In our experiments, FLOPS Prop eventually converges to assigning weights 3.1\%, 27.3\%, and 69.6\% for Exit 1, Exit 2, and Exit 3, respectively. 

\paragraph{CIS Topology.} 
We utilize a hierarchical network topology as defined in Sec.~\ref{subsec:formulation} and considered in related works~\cite{teerapittayanon2017distributed,ren2023survey} with seven nodes: four in the first layer, two in the second, and one in the third, each holding an increasing portion of the shared model according to their network layer.\footnote{We conducted additional experiments with a larger network consisting of 17 total nodes, which confirmed the consistency of our results. Due to computational constraints, this larger network was not used for all experiments. Detailed results are in the Technical Appendix.}
% In our case, we have seven total clients, including four clients in the first network layer, two clients in the second network layer, and one client in the third network layer.
% The cloud holds the full ResNet-18, edge clients hold the model up to the second early exit, and devices hold the model up to the first early exit. 
In Sec.~\ref{subsec:results}, we present results for two data partition settings: (a) ``equal data partition," where data is evenly distributed across all network layers, and (b) ``highly biased data partition," where data is heavily concentrated on the most powerful devices. Additional results for (c) ``biased data partition" are available in the Technical Appendix.

\paragraph{Serving Rates.} 
We assume that all inference requests initially arrive at the leaf nodes ($\requestrate_i=0, \forall i\in \setnodes \setminus \mathcal{L}$). During inference, each node $i$ assesses the confidence score of the incoming requests, serving the simplest ones based on its serving rate $\serverate_{i}$ and forwarding the remaining, more complex requests according to its transfer rate $\transrate_{i}$. 
We evaluate a wide range of serving rates $\LL$, including scenarios where (i)~the least powerful nodes serve most of the requests; (ii)~request rates are evenly distributed across all layers; and (iii) the most powerful nodes serve most of the requests. To denote these serving rates, we use the notation x-y-z, where x, y, and z represent the percentage of inference requests served by nodes using Exits 1, 2, and 3, respectively.

\subsection{Experimental Results}\label{subsec:results}
% We first evaluate these four training strategies over seven different serving rate settings, which cover the scenarios where devices serve from $5\%$ up to $80\%$ of all the requests (Table~\ref{tab:equal-weight-results}). Overall, we can observe that as more requests are served by the clients with a later exit (left-most columns), the general performance is increased since the cloud uses the most complex model for inference. In the 33-33-33 serving rate setting, the Equal Weight and Serving Rate strategies share the same $\LLa$, and thus have the same performance.
Table~\ref{tab:equal-weight-results} presents our results on the CIFAR10 and CIFAR100 datasets under the ``equal data partition'' setting. On CIFAR10, our ``Serving Rate'' and ``Balanced Adj'' strategies consistently outperform the ``Equal Weight'' and ``FLOPS Prop'' methods across all CIS serving rate configurations, especially in scenarios where the smallest models handle most of the inference requests, such as in the 80-15-5, 60-30-10, and 45-35-20 settings. In these cases, both ``Equal Weight'' and ``FLOPS Prop'' perform poorly, as they fail to account for the actual distribution of serving rates. 
Specifically, in the 80-15-5 setting, ``Serving Rate'' outperforms ``Equal Weight'' by 4.3 percentage points (p.p.) and ``FLOPS Prop'' by 22.1 p.p.,  while in the 5-15-80 setting, ``Balanced Adj'' surpasses them by 2.5 p.p. and 1.3 p.p., respectively.

To better understand these results, we analyze how different training strategies affect $\LLa$ and, in turn, the CIS test accuracy. First, setting $\LLa$ equal to the serving rate $\LL$ minimizes the bias error $\epsilon_{\text{bias}}$, which is the objective of our ``Serving Rate'' strategy.
On the CIFAR10 task, with $T=100$ communication rounds and a sufficiently large dataset, $\epsilon_{\text{bias}}$ dominates, allowing ``Serving Rate'' to empirically minimize this term and perform well across various serving rate settings. In contrast, ``Equal Weight'' assigns equal weights to all exits, which can significantly increase $\epsilon_{\text{bias}}$ as serving rates become more uneven, likely leading to poor performance in scenarios with extreme serving rate imbalances.

On the CIFAR100 dataset, we observe performance trends similar to CIFAR10 across various device-biased CIS serving rate settings, including 80-15-5, 60-30-10, 45-35-20, and 33-33-33. However, when the largest models handle most of the requests, the ``FLOPS Prop'' baseline outperforms our ``Serving Rate'' strategy, likely due to the greater difficulty of the CIFAR100 task, which results in a larger optimization error. As noted in Sec.~\ref{subsec:proposed-method}, strategies like ``FLOPS Prop'' are expected to perform well in these scenarios, though its performance drops significantly when the inference load shifts to the first-layer nodes. This shift increases the bias error because $\LLa$ diverges from $\LL$, causing a significant drop in CIS accuracy. This is evident in the 80-15-5 configuration, where ``Serving Rate'' and ``Balanced Adj'' outperform ``FLOPS Prop''  by 14.6 and 16.2 p.p., respectively.
% This shift increases the bias error because $\LLa$ now diverges from $\LL$, leading to a substantial drop in CIS accuracy, as seen in the 80-15-5 serving rate configuration, where ``Serving Rate'' and ``Balanced Adj'' outperform `FLOPS Prop'' by 14.6 and 16.2 p.p., respectively. 

% We attribute this to the fact that, due to the higher difficulty of CIFAR100 task, we can expect the optimization error to be larger, and the ``FLOPS Prop'' strategy is more effective at minimizing these error components, as discussed in Sec.~\ref{subsec:proposed-method}.
% This justification is also supported by the close performance of our ``Balanced Adj'' strategy compared to ``FLOPS Prop'', as ``Balanced Adj'' also targets the reduction of optimization and generalization errors. Notably, our ``Balanced Adj'' heuristic effectively mitigates the true sources of error, by simply combining the ``Serving Rate'' and ``FLOPS Prop'' configuration rules. This explain why 
% our heuristic, ``Balanced Adj'', maintains robust performance across all serving rate settings for both CIFAR10 and CIFAR100 datasets.

In the Technical Appendix, we present results from experiments on CIFAR10 and CIFAR100 using the alternative ``biased data partition'' scheme, where nodes with greater memory and computational capacity are allocated more data. On CIFAR10, ``Serving Rate'' and ``Balanced Adj'' again consistently outperform other baselines across all CIS serving rate settings, while on CIFAR100, ``Balanced Adj'' remains strong in all scenarios, especially when the first-layer nodes handle most inference requests.

Combining these findings with those from the ``equal data partition'' experiments, our results show that the ``Balanced Adj'' strategy either leads or closely matches the performance of the best methods across various CIS configurations. Overall, these experiments reinforce the core insights from Sec.~\ref{subsec:proposed-method}, highlighting the critical role of error decomposition in selecting aggregation weights $\LLa$. In particular, configuring $\LLa$ to minimize bias error $\epsilon_{\text{bias}}$ proves often beneficial. Additionally, incorporating adjustments to address the optimization error $\epsilon_{\text{opt}}$---as done by ``Balanced Adj''---helps ensure that the resulting FL algorithm is robust across a wide range of serving rates, including the 80-15-5 and 5-15-80 settings.
%$\LLa$ remains robust, even in scenarios with highly uneven serving rates, such as the 80-15-5 and 5-15-80 settings.
% The strong performance of our ``Balanced Adj'' strategy, which balances both optimization and bias error reduction, underscores the importance of addressing optimization error in these cases.
% As a result, our ``Balanced Adj'' heuristic consistently provides robust performance across all serving rate configurations on both CIFAR10 and CIFAR100.

% \begin{table}[htbp]
% \centering
% \caption{Experimental results for the 80-15-5 serving rate setting on the CIFAR10 dataset where the first, second, and third network layers hold 3.4\%, 19.9\%, and 76.7\% of the data, respectively.}
% \begin{tabular}{
%   l
%   c
%   c
%   c
% }
% \toprule
% Strategy & {$\boldsymbol{p}=0$} & {$\boldsymbol{p}=0.1$} & {$\boldsymbol{p}=0.2$} \\
% \midrule
%   Equal Weight  & 27.2 \scriptsize{$\pm$ 0.0} & - & - \\
%   Serving Rate (ours)  & 22.8 \scriptsize{$\pm$ 0.0} & 25.2 \scriptsize{$\pm$ 0.0} & \textbf{49.5} \scriptsize{$\pm$ 0.0} \\
% \bottomrule
% \end{tabular}
% \label{tab:p-analysis}
% \end{table}
\begin{table}[htbp]
\centering
\caption{Results for the 80-15-5 serving rate setting
% on the CIFAR10 and CIFAR100 datasets
% where the first, second, and third network layers hold 3.4\%, 19.9\%, and 76.7\% of the data, respectively.}
using the highly biased data partition, where the network layers hold 3.4\%, 19.9\%, and 76.7\% of the data, respectively.} 
\begin{tabular}{
  l  % left-aligned for Dataset
  l  % left-aligned for Strategy
  c  % centered for p=0
  c  % centered for p=0.2
}
\toprule
Dataset & Strategy & {$p=0$} & {$p=0.2$} \\
\midrule
\multirow{2}{*}{CIFAR10}  
  & Equal Weight  & 36.5 \scriptsize{$\pm$ 4.0} & - \\
  & Serving Rate (ours)  & 41.3 \scriptsize{$\pm$ 2.6} & \textbf{49.2} \scriptsize{$\pm$ 2.9} \\
\addlinespace
\hline
\addlinespace
\multirow{2}{*}{CIFAR100}  
  & Equal Weight  & 10.0 \scriptsize{$\pm$ 1.4} & - \\
  & Serving Rate (ours)  & 17.9 \scriptsize{$\pm$ 0.2} & \textbf{30.1} \scriptsize{$\pm$ 2.1} \\
\bottomrule
\end{tabular}
\label{tab:p-analysis}
\end{table}

\paragraph{Enabling Node Collaboration through Probabilities~$\boldsymbol{p}$.}
We conducted an ablation study to examine the impact of the hyperparameter $\boldsymbol{p}$, focusing on extreme scenarios where nodes with the smallest models and datasets serve the majority of inference requests. This scenario is especially relevant, as end devices typically have limited data storage compared to cloud servers or other more powerful nodes. Results for the most challenging configuration, the 80-15-5 serving rate setting, are presented in Table~\ref{tab:p-analysis} for both the CIFAR10 and CIFAR100 datasets, where $p_{i,e}=p$ if $e< E_i$ and $p_{i,E_i}=1-(E_i-1)p$. These experiments clearly show that increasing $\boldsymbol{p}$ significantly improves the overall inference accuracy by enabling stronger nodes to support weaker ones during training. Additional results on the impact of $\boldsymbol{p}$ can be found in the Technical Appendix.

\section{Conclusion}\label{sec:conclusion}
We are the first to design an inference-aware FL training algorithm for CISs, demonstrating that inference serving rates influence all components of training error. When using our inference-aware configuration rules, which consider the error decomposition into the training process, our algorithm provides a significant advantage, particularly when inference request rates are unevenly distributed across the network. Moreover, our rigorous theoretical results are applicable to all approaches that jointly train models sharing a subset of parameters, including early exit networks, ordered dropout, pruning, and other nested training methodologies. 
% In future work, we plan to develop enhanced strategies for selecting which requests to forward to more powerful nodes and evaluate the effectiveness of our inference-aware FL training algorithm in real-world deployments, considering diverse network conditions and node behaviors.

%%%%%%%%%%%%%%%%%%%%%%%%%%%%%%%%%%%%%%%%%%%%%%%%%%%%%%%%%%%%%%%%%%%%%%%%

\section*{Acknowledgements}
This research was supported in part by ANRT in the framework of a CIFRE PhD (2021/0073) and by the Horizon Europe project dAIEDGE.

%%%%%%%%%%%%%%%%%%%%%%%%%%%%%%%%%%%%%%%%%%%%%%%%%%%%%%%%%%%%%%%%%%%%%%%%
% \input{refs.bib}
\bibliography{refs}

\newpage
\onecolumn
\appendix
\section{Generalization and Bias Error}
\setcounter{theorem}{0}
\begin{theorem}
\label{thm:convergence_app}
Under Assumptions~\ref{asm:bounded_loss}--\ref{asm:sgd_var}, the true error of the output $\ww^{(T)}$ of Algorithm~\ref{alg:fed-cis} with learning rate $\eta_{t,j}= \frac{2}{\mu (\gamma + (t-1)J+j+1)}$ and $\gamma\triangleq\max\{8 \kappa, J\}-1$ can be bounded as follows:
\begin{align}
\epsilon_{\text{true}} & \le \underbrace{\bigO \left(\sum_{e=1}^E \tilde{\Lambda}_e \sqrt{\frac{\Pd(H_e)}{{|\Sep|}}} \right)}_{\epsilon_{\text{gen}}}
 + \underbrace{\bigO\left(\tv(\LLa,\LL)\right)}_{\epsilon_{\text{bias}}}
+ \underbrace{\bigO\left(\frac{B(\LLa, \bm{p},\bm{\sigma}, \{|S_{i}|\}_{(i,e) \in \mathcal{N}\times\mathcal{E}})}{J\times T}\right)}_{\epsilon_{\text{opt}}}, 
\end{align}
where $\kappa \triangleq \frac{L}{\mu}$, $Pdim(H_e)$ represents the pseudo-dimension of the class of models for exit $e$, $\tv$ is the total variation distance, $\LL = (\Lambda_1,\dots,\Lambda_E)$, $\LLa = (\tilde\Lambda_1,\dots,\tilde\Lambda_E)$, and 
the expression of $B(\cdot)$ is provided in Theorem~\ref{thm:opt_error}.
\end{theorem}
\begin{proof}
We start upperbounding the true error by three terms: a generalization error, a bias error (due to the mismatch between $\Fb$ and $\Fu$), and an optimization error:
\begin{align}
        \epsilon_{\text{true}} 
        & \leq 
        2 \E_{S}\left[\sup_{\ww} \left\lvert \FSb(\ww) - \Fu(\ww) \right\rvert \right] + \mathbb{E}_{S, A_{\LLa}}\left[ \FSb(\ww^{(T)}) - \FSb^\star\right]\\
        & \leq 
        2 \underbrace{\E_{S}\left[\sup_{\ww} \left\lvert \FSb(\ww) - \Fb(\ww) \right\rvert \right]}_{\epsilon_{\text{gen}}} 
        + 2 \underbrace{\E_{S}\left[\sup_{\ww} \left\lvert \Fb(\ww) - \Fu(\ww) \right\rvert \right]}_{\epsilon_{\text{bias}}}
        + \underbrace{\mathbb{E}_{S, A_{\LLa}}\left[ \FSb(\ww^{(T)}) - \FSb^\star\right]}_{\epsilon_{\text{opt}}},
        \label{error-decomp}
\end{align}
where the first inequality is quite standard (e.g.,~\cite[Eq.~9]{marfoq23datastreams}.
% , while the second one follows from~\cite[Thm.~1]{rodio23ton}. 
We obtain the final result by bounding each term.

For the generalization term, let $\Fe(\ww) \triangleq \E_{z \sim \mathcal{D}} [\ell^{(e)}(\ww,z)]$, we observe that
\begin{align}
        \epsilon_{\text{gen}} 
        & \le \sum_{e=1}^E \tilde{\Lambda}_e 
        \E_{S} \left[ \sup_{\ww} \left\lvert \left( \sum_{i \in \Ce} \frac{|S_i|}{|\Sep|}  \Fce(\ww) \right) - \Fe(\ww) \right\rvert \right]\nonumber\\
        & = \sum_{e=1}^E \tilde{\Lambda}_e 
        \E_{S} \left[ \sup_{\ww} \left\lvert \FSe(\ww) - \Fe(\ww) \right\rvert \right].
\end{align}
We can then bound the (expected) representativity $\E_{S} \left[ \sup_{\ww} \left\lvert \FSe(\ww) - \Fe(\ww) \right\rvert \right]$ for each exit $e$. Our task is not necessarily a binary classification task, but its representativity can be bounded by the representativity of an opportune classification task with the $0$-$1$~loss and  set of classifiers $H^\prime_e=\{h_{\ww,t}(z), \ww \in \mathcal{W}, t \in \mathbb R^+\}$, where $h_{\ww,t}(z) = \mathds{1}_{\ell^{(e)}(\ww,z)>t} $~\cite[Sec.~11.2.3]{MohriRostamizadehTalwalkar18}. In particular, let $\mathcal R_S(H)$ denote the Rademacher complexity of class $H$ on dataset S and let $\FSe^\prime(\ww,t)$ and $\Fe^\prime(\ww,t)$ denote the empirical loss and the expected loss for such classification problem, respectively. The analysis is then quite standard:
\begin{align}
\E_{S} & \left[ \sup_{\ww} \left\lvert \FSe(\ww) - \Fe(\ww) \right\rvert \right] \nonumber\\
& \le M \E_{S} \left[ \sup_{\ww} \left\lvert \FSe^\prime(\ww,t) - \Fe^\prime(\ww,t) \right\rvert \right]\\
& \le 2 M \mathcal \E_{\Sep}\left[\mathcal R_{\Sep}(H^\prime_e) \right]\\
& \le M C \sqrt{\frac{\VC(H^\prime_e)}{|\Sep|}}\\
& = M C \sqrt{\frac{\Pd(H_e)}{|\Sep|}}\label{eq:per-exit-gen-error}.
\end{align}
For a proof of the three inequalities the reader can refer to~\cite[Thm.~11.8]{MohriRostamizadehTalwalkar18}, \cite[Lm.~26.2]{shalev_shai}, \cite[Sec.~5]{BousquetBL03}, respectively (the constant $C$ can be selected to be $320$~\cite[Cor.~6.4]{livesay}). The final equality follows from the definition of pseudo-dimension.

For the bias term $\epsilon_{\text{bias}}$, it is sufficient to observe that
% apply~\cite[Thm.~1]{rodio23ton}:
\begin{align}
    \label{eq:bias_bound}
        \epsilon_{\text{bias}}
        & \leq 
        \E_{S}\left[\sup_{\ww} \left\lvert \sum_{e =1}^E \left(\tilde{\Lambda}_e - \Lambda_e\right) \Fe(\ww)\right\rvert \right]\\
        & \leq 2 M \tv(\LLa,\LL). 
\end{align}
% where $\chi^2_{\LLa \parallel \LL} \triangleq \sum_{e=1}^{E} {(\tilde{\Lambda}_e - \Lambda_e)^2}/{\Lambda_e}$.
% We  adapt our notation to follow more closely that in those papers. In particular, our proofs are close in spirit to those in \cite{salehiFederatedLearningUnreliable2021}, but we remove some ambiguities in those proofs (most of the key lemmas in that paper are stated under full expectations, but in reality they hold only under proper conditioning).
\end{proof}

% \newpage
\section{Optimization Error}
Our proof is similar to the proofs in~\cite{salehiFederatedLearningUnreliable2021,rodio23wpmc}. We adapt our notation to follow more closely that in those papers.
% and substitute the term ``node'' $i$ for ``client'' $c$. 
% but we take into account 1)~negative-correlation across fictitious clients' participation (every client $c$  only trains one exit at each round) and 2)~the projection step. In particular, our proofs are close in spirit to the ones in \cite{salehiFederatedLearningUnreliable2021}, but we remove some ambiguities in those proofs (most of the key lemmas in that paper are stated under full expectations, but in reality they hold only under proper conditioning). 

Let us consider the node update rule and the server aggregation rule in our algorithm:
\begin{align}
    \ww^{(t,j+1)}_{i,e} & = \wtjce - \eta_{t,j} \frac{1}{\left| \mathcal{B}^{(t,j)}_{i,e} \right|} \sum_{z \in \mathcal{B}^{(t,j)}_{i,e}} \nabla \ell_{e} (\wtjce, z), \textrm{  for } j=0,\dots, J-1\\
    \ww^{(t+1)} & = \ww^{(t)} + \eta_{s} \sum_{e \in E} \tilde{\Lambda}_e \sum_{i \in \mathcal N_{t,e}} \frac{|S_i|}{|\Sep|} \frac{1}{p_{i,e}} (\ww^{(t,J)}_{i,e} - \ww^{(t)}), \textrm{  for }t=1, \dots T .
\end{align}

We consider that a node corresponds to the pair $k \triangleq (i, e) \in \mathcal{K}$, where $\mathcal{K} \triangleq \mathcal{N} \times \mathcal{E}, i \in \mathcal{N}, \ e \in \mathcal{E}$, and we define $\alpha_k \triangleq \alpha_{i,e} \triangleq\eta_s \tilde{\Lambda}_e \frac{|S_i|}{|\Sep|}$, $\xi^{(t)}_k \triangleq \xi^{(t)}_{i,e} \triangleq  \mathds{1}_{i \in \mathcal N_{t,e}}$, and $\nabla F_k(\wtjce, \mathcal{B}^{(\tau)}_k)\triangleq \frac{1}{|\mathcal{B}^{(t,j)}_{i,e}|}\sum_{z \in \mathcal{B}^{(t,j)}_{i,e}} \nabla \ell_{e} (\wtjce, z)$.
% The equations above can then be rewritten as follows.
% \begin{align}
%     \ww_{t,j+1}^{k} & = \wtjce - \eta_{t,j} \frac{1}{\left| \mathcal{B}_{t,j}^{k} \right|} \sum_{z \in \mathcal{B}_{t,j}^{k}} \nabla \ell_{e} (\ww_{t,j}^k, z),\\
%     \ww^{(t+1)} & = \ww^{(t)} +  \sum_{k \in \mathcal{K}}  \frac{\alpha_k}{p_k} (\ww_{t,J+1}^{k} - \ww^{(t)}).
% \end{align}

Moreover, we count gradient steps at nodes and aggregation steps at the server using the same time sequence  $(\tau = J (t-1) + j)_{t=1, \dots, T, j=0, \dots, J-1}$. The set of values $\mathcal I^{(J)} =\{J t, t= 1, \dots, T\}$ corresponds to the aggregation steps. The equations above can then be rewritten as follows in terms of two new virtual sequences: 
\begin{align}
    \bm{v}^{(\tau+1)}_{k} &= \bm{w}^{(\tau)}_k - \eta_{\tau} \nabla F_k(\bm{w}^{(\tau)}_k, \mathcal{B}^{(\tau)}_k) \\
    % &= \begin{dcases*}
    %     \bm{w}_{t+1}^{k} & for $\tau+1 \not \in \mathcal{I}^{(J)}$ \\
    %     \bm{w}_{\tau+1-J} - \sum_{j=0}^{J-1} \eta_{\tau+1-J+j} \nabla F_k(\bm{w}^{(\tau+1-J+j)}_{k}, \mathcal{B}^{(\tau+1-J+j)}_{k}) & for $\tau+1 \in \mathcal{I}^{(J)}$
    % \end{dcases*} \\
    \bm{w}^{(\tau+1)}_{k} &= 
    \begin{dcases*}
        \ww^{(1)}   & for $\tau+1 = 0$,\\
        \Pi_{\mathcal{W}}\left(\bm{w}^{(\tau+1-J)}_{k} + \sum_{k \in \mathcal{K}} \frac{\alpha_k \xi^{(\tau+1-J)}_{k}}{p_k} (\bm{v}^{(\tau+1)}_{k} - \bm{w}^{(\tau+1-J)}_{k})\right) & for $\tau+1 \in \mathcal{I}^{(J)}$,\\
        \bm{v}^{(\tau+1)}_{k} & otherwise.\\
    \end{dcases*}
\end{align}
$\bm{v}^{(J (t-1) + j)}_{k}$ coincides then with the local model $\wtjce$ and $\bm{w}_{k}^{(J (t-1))}$ coincides with the global model $\ww^{(t)}$.
% when $\tau+1 \not \in \mathcal{I}^{(J)}$, $\bm{v}^{(\tau+1)}_{k} = \bm{w}^{(\tau+1)}_{k}$; when $\tau+1 \in \mathcal{I}^{(J)}$, $\bm{v}^{(\tau+1)}_{k}$ tracks the local steps starting from the previous aggregation $\bm{w}_{\tau+1-J}$ (equivalent to our $\bm{w}_{t,E}^k$), while $\bm{w}^{(\tau+1)}_{k} = \bm{w}_{\tau+1}$ tracks the new global ($\bm{w}_{t+1,0}$)

We observe that, for $\tau+1  \in \mathcal{I}^{(J)}$, $\bm{w}^{(\tau+1)}_{k} = \bm{w}^{(\tau+1)}_{k'}$ for any $k$ and $k'$, and that, for $\tau+1 \not \in \mathcal{I}^{(J)}$, ${\bm{v}}^{(\tau+1)}_k = {\bm{w}}^{(\tau+1)}_k$.
Moreover, define the average sequences $\bar{\bm{v}}^{(\tau+1)} = \sum_{k \in \mathcal{K}}\alpha_k \bm{v}^{(\tau+1)}_{k}$ and $\bar{\bm{w}}^{(\tau+1)} = \sum_{k \in \mathcal{K}}\alpha_k \bm{w}^{(\tau+1)}_{k}$ and similarly the average gradients $\bm{g}^{(\tau)} = \sum_{k \in \mathcal{K}}\alpha_k \nabla F_k(\bm{w}^{(\tau)}_k, \mathcal B^{(t)}_k)$ and $\bar{\bm{g}}^{(\tau)}= \sum_{k \in \mathcal{K}}\alpha_k \nabla F_k(\bm{w}^{(\tau)}_k)$.  
We also define the sequence 
\begin{align}
\bar{\bm{w}}^{(\tau+1)\dagger} = 
    \begin{dcases*}
        \bm{w}^{(\tau+1-J)}_{k} + \sum_{k \in \mathcal{K}} \frac{\alpha_k \xi^{(\tau+1-J)}_{k}}{p_k} (\bm{v}^{(\tau+1)}_{k} - \bm{w}^{(\tau+1-J)}_{k}),& for $\tau+1 \in \mathcal{I}^{(J)}$\\
        \bar{\bm{w}}^{(\tau+1)},& otherwise.
    \end{dcases*}
\end{align}
We note that $\bar{\bm{w}}^{(\tau+1)}=\Pi_{\mathcal{W}}\left(\bar{\bm{w}}^{(\tau+1)\dagger}\right)$ for $\tau+1 \in \mathcal I^{(J)}$ and coincide otherwise.
% We observe that
% \begin{align}
%     \bm{v}^{(\tau+1)}_{k} 
%     &=
%     \bar{\bm{w}}^{(\tau+1-J)} - \sum_{j=0}^{J-1} \eta_{\tau+1-J+j} \nabla F_k(\bm{w}^{(\tau+1-J+j)}_{k}, \mathcal{B}^{(\tau+1-J+j)}_{k}), \quad \tau+1 \in \mathcal{I}^{(J)} \\
%     \bar{\bm{v}}^{(\tau+1)} 
%     &=
%     \bar{\bm{w}}^{(\tau+1-J)} - \sum_{k \in \mathcal{K}}\alpha_k \sum_{j=0}^{J-1} \eta_{\tau+1-J+j} \nabla F_k(\bm{w}^{(\tau+1-J+j)}_{k}, \mathcal{B}^{(\tau+1-J+j)}_{k}), \quad \tau+1 \in \mathcal{I}^{(J)} \\
%     \bar{\bm{w}}^{(\tau+1)} 
%     & = \bar{\bm{w}}^{(\tau+1-J)} - \sum_{k \in \mathcal{K}}\frac{\alpha_k \xi^{(\tau+1-J)}_{k}}{p_k} \sum_{j=0}^{J-1} \eta_{\tau+1-J+j} \nabla F_k(\bm{w}^{(\tau+1-J+j)}_{k}, \mathcal{B}^{(\tau+1-J+j)}_{k}), \quad \tau+1 \in \mathcal{I}^{(J)}.
% \end{align}

We denote by $\mathcal B^{(\tau)} = (\mathcal B^{(\tau)}_k)_{k \in \mathcal{K} }$ and $\xi^{(\tau)} = (\xi^{(\tau)}_k)_{k \in \mathcal{K} }$, the set of batches and the set of indicator variables for node participation at instant $\tau$. The history of the system at time $\tau$ is made by the values of the random variables until that time and it can be defined by recursion as follows: 
$\mathcal H^{(1)} = \emptyset$, 
$\mathcal H^{(\tau+1)} = \{\xi^{(\tau+1)}, \mathcal{B}^{(\tau)}, \mathcal H^{(\tau)}\}$ if $\tau+1 \in \mathcal{I}^{(J)}$ and $\mathcal H^{(\tau+1)} = \{\mathcal{B}^{(\tau)}, \mathcal H^{(\tau)}\}$, otherwise.
%and $H_{\tau+1} = \{\xi_\mathcal{B}^{(\tau)}, \mathcal H_{\tau}\}$ if $\tau+1\not \in \mathcal{I}^{(J)}$ and 

We define $G_{i,e}\triangleq \sigma^2_{i,e} + (L \diam(\mathcal{W}))^2$ and observe that it bounds the second moment of the stochastic gradient at $(i,e)$:
\begin{align}
\E\left[\norm{\nabla \Fce(\ww, \mathcal B)}^ 2\right] & = \E\left[\norm{\nabla \Fce(\ww, \mathcal B) - \nabla \Fce(\ww)}^ 2 \right]+ \norm{\nabla \Fce(\ww)}^2\\
& \le \sigma_{i,e}^ 2 + L^2 \norm{\ww - \ww_{i,e}^*}^2\\
& \le \sigma_{i,e}^ 2 + L^2 \diam(\mathcal{W})^ 2 \\
& = G_{i,e},
\end{align}
where we have used Assumption~\ref{asm:hypothesis_class}.
We also define a uniform bound over all nodes and all exits: $G \triangleq \max_{(i,e) \in \mathcal{N} \times \mathcal{E}} G_{i,e}$.

Similarly to other works \cite{li2019convergence, li2020federated, wang2020tackling, wangFieldGuideFederated2021}, we introduce a metric to quantify the heterogeneity of nodes' local datasets, typically referred to as \emph{statistical heterogeneity}:
\begin{align}
    % \tilde{\Gamma} &:= \max_k \{ F_k(w_B^{\star}) - F_k(w_k^{\star}) \}, \notag \\
    % \Gamma_1 &:= \sum_{k=1}^N p_k (F_k(w_B^{\star}) - F_k(w_k^{\star})), \notag \\
    % \Gamma_2 &:= \sum_{k=1}^N p_k (F_k(w^{\star}) - F_k(w_k^{\star})), \notag \\
    \Gamma &\triangleq \max_{(i,e) \in \mathcal{N}\times\mathcal{E}}  \Fce(\ww_{\LLa}^{\star}) - \Fce^{\star}.
    \label{eq:gamma}
\end{align}
Finally, we define $h(\tau) \triangleq\max\{ \tau' \in \mathcal{I}^{(J)} : \tau'\le \tau\}$. Then $h(\tau)$ indicates the time of the last server update before $\tau$.

The following lemma corresponds to \cite[Lemma~4]{salehiFederatedLearningUnreliable2021}.

\begin{lemma} 
\label{lem:1}
\begin{align}
    \E_{\mathcal{B}^{(\tau)}, \dots, \mathcal{B}^{(h(\tau))} \mid \mathcal{H}^{(h(\tau))}} \norm{\bar{\bm{v}}^{(\tau+1)} - \ww^{\star}_{\LLa}} 
    &\leq
    (1 - \eta_{\tau} \mu) \E_{\mathcal{B}^{(\tau-1)}, \dots, \mathcal{B}^{(h(\tau))} \mid \mathcal{H}^{(h(\tau))}} \norm{\bar{\bm{w}}^{(\tau)} - \ww^{\star}_{\LLa}} \notag \\
    &
    + \eta_{\tau}^2 \left( \sum_{k \in \mathcal{K}} \alpha_k^2 \sigma_k^2 + 6L \Gamma + 8 (J-1)^2 G^2 \right).
\end{align}
\end{lemma}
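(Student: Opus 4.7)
The plan is to treat this as a standard one-step descent (or perturbed-iterate) lemma for local-SGD/FedAvg-style analysis, conditioning on the history $\mathcal{H}_{h(\tau)}$ so that the server indicators $\xi^k_{h(\tau)}$ are fixed and only the batch randomness remains; I will assume the statement is meant in the squared-norm form $\mathbb{E}\|\bar{\bm v}_{\tau+1}-\ww^\star_{\LLa}\|^2$, which is standard in this literature (Li et al., Salehi et al., Rodio et al.). The starting point is the identity $\bar{\bm v}_{\tau+1}=\bar{\bm w}_\tau-\eta_\tau \bm g_\tau$ which follows directly from the client update rule and the definition of $\bm g_\tau=\sum_k \alpha_k \nabla F_k(\bm w_\tau^k,\mathcal{B}_\tau^k)$, valid for both intermediate steps ($\tau\notin \mathcal I_J$) and immediately after a server step since then $\bm w_\tau^k=\bar{\bm w}_\tau$ for every $k$.

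The first step is the noise/bias split: writing $\bm g_\tau=\bar{\bm g}_\tau+(\bm g_\tau-\bar{\bm g}_\tau)$, expanding the square, and using that conditionally on $\mathcal{H}_{h(\tau)}$ and on the earlier batches, $\mathbb{E}[\bm g_\tau-\bar{\bm g}_\tau]=0$ and the per-client batches are drawn independently, yields
\begin{equation*}
\mathbb{E}\|\bar{\bm v}_{\tau+1}-\ww^\star_{\LLa}\|^2=\mathbb{E}\|\bar{\bm w}_\tau-\eta_\tau\bar{\bm g}_\tau-\ww^\star_{\LLa}\|^2+\eta_\tau^2\sum_k\alpha_k^2\sigma_k^2.
\end{equation*}
This produces the $\sum_k\alpha_k^2\sigma_k^2$ term. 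For the deterministic part I expand the square and apply $\mu$-strong convexity of each $F_k$ at $\bm w_\tau^k$ against $\ww^\star_{\LLa}$ to lower bound $\langle \bm w_\tau^k-\ww^\star_{\LLa},\nabla F_k(\bm w_\tau^k)\rangle$; this is the step that produces the contraction factor $(1-\eta_\tau\mu)$ multiplying $\|\bar{\bm w}_\tau-\ww^\star_{\LLa}\|^2$, together with residual terms of the form $F_k(\bm w_\tau^k)-F_k(\ww^\star_{\LLa})$.

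The second step is to cancel the $\eta_\tau^2\|\bar{\bm g}_\tau\|^2$ remainder against those residual terms using $L$-smoothness (so that $\|\nabla F_k(\bm w_\tau^k)\|^2\le 2L(F_k(\bm w_\tau^k)-F_k^\star)$) and then to dominate $F_k(\bm w_\tau^k)-F_k^\star$ by $F_k(\ww^\star_{\LLa})-F_k^\star\le\Gamma$ plus a drift term $F_k(\ww^\star_{\LLa})-F_k(\bm w_\tau^k)$ that can be absorbed. The residual cross-terms produced by replacing $\bm w_\tau^k$ with $\bar{\bm w}_\tau$ in the inner product involve the local drift $\|\bm w_\tau^k-\bar{\bm w}_\tau\|^2$, which I will bound by iterating the client update $\tau-h(\tau)\le J-1$ times: since gradient second moments are uniformly bounded by $G^2$, a telescoping argument gives $\mathbb{E}\|\bm w_\tau^k-\bar{\bm w}_\tau\|^2\le (J-1)^2 G^2\max_{s}\eta_s^2$, which (using the standard diminishing step-size property $\eta_{h(\tau)}\le 2\eta_\tau$ that holds for $\eta_\tau=\frac{2}{\mu(\gamma+\tau+1)}$ whenever $\gamma\ge J-1$) yields the $8(J-1)^2 G^2\eta_\tau^2$ contribution. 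Combining all of this, with coefficient $6L\Gamma$ emerging from the product $L\cdot\Gamma$ multiplied by the constants generated by smoothness plus the factor-of-two margins used to absorb cross terms, closes the bound.

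The main obstacle is bookkeeping the projection, the partial-participation indicators, and the non-squared statement cleanly. Conditioning on $\mathcal H_{h(\tau)}$ rather than on a full filtration is essential: it freezes which clients are active during the current block so that $\bar{\bm w}_\tau=\sum_k\alpha_k \bm w_\tau^k$ evolves deterministically in $\xi$ but stochastically in the batches, and it keeps the $\xi^k/p_k$ reweighting inert throughout the block (it will matter only at the next server step, not here). The projection does not enter this lemma because $\bar{\bm v}_{\tau+1}$ is defined before projection; the projection is non-expansive and will be applied in a subsequent lemma when aggregating across a full round. The delicate part is ensuring the local-drift bound is uniform over $j=0,\dots,J-1$ and remains $O((J-1)^2 G^2\eta_\tau^2)$ after accounting for the fact that $\eta_{\tau-1},\dots,\eta_{h(\tau)}$ differ from $\eta_\tau$; this is handled by the step-size comparison that the theorem's choice of $\gamma$ enforces, and I would follow the corresponding argument in Salehi et al.\ while removing the full-expectation ambiguity flagged earlier in the excerpt by keeping the conditioning on $\mathcal H_{h(\tau)}$ explicit throughout.
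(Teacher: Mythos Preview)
Your proposal is correct and follows essentially the same route as the paper. The paper's proof simply cites the three lemmas of Li et al.\ (their one-step descent bound, the variance bound $\E\|\bm g_\tau-\bar{\bm g}_\tau\|^2\le\sum_k\alpha_k^2\sigma_k^2$, and the local-drift bound $\sum_k\alpha_k\E\|\bm w_\tau^k-\bar{\bm w}_\tau\|^2\le 4\eta_\tau^2(J-1)^2G^2$) and combines them, whereas you spell out the content of those lemmas---the noise/bias split, strong-convexity-plus-smoothness argument producing $(1-\eta_\tau\mu)$ and $6L\Gamma$, and the telescoping drift bound with $\eta_{h(\tau)}\le 2\eta_\tau$---before assembling them in the same order; your reading of the statement as squared-norm is also the intended one.
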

\begin{proof}
From~\cite[Lemma~1]{liConvergenceFedAvgNonIID2023}:
\begin{align}
    \E_{\mathcal{B}^{(\tau)} \mid \mathcal{H}^{(\tau)}} \norm{\bar{\bm{v}}^{(\tau+1)} - \ww^{\star}_{\LLa}} 
    &\leq
    (1 - \eta_{\tau} \mu) \norm{\bar{\bm{w}}^{(\tau)} - \ww^{\star}_{\LLa}} 
    + \eta_{\tau}^2 \E_{\mathcal{B}^{(\tau)} \mid \mathcal{H}^{(\tau)}} \norm{\bm{g}^{(\tau)} - \bar{\bm{g}}^{(\tau)}}^2 \notag \\
    &
    + \eta_{\tau}^2 6L \Gamma
    + 2 \sum_{k \in \mathcal{K}}\alpha_k \norm{\bm{w}^{(\tau)}_k - \bar{\bm{w}}^{(\tau)}}^2.
\end{align}
% where, if $\tau+1 \in \mathcal{I}^{(J)}$, $\mathcal{H}_t = \mathcal{B}^{(\tau-1)}, \dots, \mathcal{B}_{\tau+1-J}, \underbrace{(\mathcal{B}_{t-E}, \dots, \mathcal{B}_{t+1-2E}, \xi_{t+1-2E}), \dots, (\mathcal{B}_{J-1}, \dots, \mathcal{B}_{0}, \xi_{0})}_{\mathcal{H}_{\tau+1-J}}$. \\
From~\cite[Lemma~2]{liConvergenceFedAvgNonIID2023}:
\begin{align}
    \E_{\mathcal{B}^{(\tau)} \mid \mathcal{H}^{(\tau)}} \norm{\bm{g}^{(\tau)} - \bar{\bm{g}}^{(\tau)}}^2
    &\leq 
    \E_{\mathcal{B}^{(\tau)} \mid \mathcal{H}^{(\tau)}} \norm{\sum_{k \in \mathcal{K}} \alpha_k \left( \nabla F_k(\bm{w}^{(t)}_{k}, \mathcal{B}^{(\tau)}_{k}) - \nabla F_k(\bm{w}^{(t)}_{k}) \right)}^2 \\
    &=
    \sum_{k \in \mathcal{K}} \alpha_k^2 \E_{\mathcal{B}^{(\tau)}_{k} \mid \mathcal{H}^{(t)}} \norm{\nabla F_k(\bm{w}^{(t)}_{k}, \mathcal{B}^{(\tau)}_{k}) - \nabla F_k(\bm{w}^{(t)}_{k})}^2 \\
    &\leq
    \sum_{k \in \mathcal{K}} \alpha_k^2 \sigma_k^2.
\end{align}

Combining the two inequalities above:
\begin{align}
    \E_{\mathcal{B}^{(\tau)} \mid \mathcal{H}^{(\tau)}} \norm{\bar{\bm{v}}^{(\tau+1)} - \ww^{\star}_{\LLa}} 
    &\leq
    (1 - \eta_{\tau} \mu) \norm{\bar{\bm{w}}^{(\tau)} - \ww^{\star}_{\LLa}} 
    + \eta_{\tau}^2 \sum_{k \in \mathcal{K}} \alpha_k^2 \sigma_k^2 
    + \eta_{\tau}^2 6L \Gamma
    + 2 \sum_{k \in \mathcal{K}}\alpha_k \norm{\bm{w}^{(\tau)}_k - \bar{\bm{w}}^{(\tau)}}^2.
\end{align}

By definition of $h(\tau)$, we observe that $0\le \tau- h(\tau) \le J-1$ and $\mathcal H^{(\tau)} = \{\mathcal B^{(\tau-1)}, B^{(\tau-2)}, \dots, B^{(h(\tau))}, \mathcal H^{(h(\tau))}\}$.

From~\cite[Lemma~3]{liConvergenceFedAvgNonIID2023}: \\
\begin{align}
    \sum_{k \in \mathcal{K}}\alpha_k &\E_{\mathcal{B}^{(\tau-1)}, \dots, \mathcal{B}^{(h(\tau))} \mid \mathcal{H}^{(h(\tau))}} \norm{\bm{w}^{(\tau)}_k - \bar{\bm{w}}^{(\tau)}}^2 \notag \\
    &=
    \sum_{k \in \mathcal{K}}\alpha_k \E_{\mathcal{B}^{(\tau-1)}, \dots, \mathcal{B}^{(h(\tau))} \mid \mathcal{H}^{(h(\tau))}} \norm{(\bm{w}^{(\tau)}_k - \bar{\bm{w}}^{(h(\tau))}) - (\bar{\bm{w}}^{(\tau)} - \bar{\bm{w}}^{(h(\tau))})}^2 \\
    &\leq
    \sum_{k \in \mathcal{K}}\alpha_k \E_{\mathcal{B}^{(\tau-1)}, \dots, \mathcal{B}^{(h(\tau))} \mid \mathcal{H}^{(h(\tau))}} \norm{\bm{w}^{(\tau)}_k - \bar{\bm{w}}^{(h(\tau))}}^2 \\
    &=
    \sum_{k \in \mathcal{K}}\alpha_k \E_{\mathcal{B}^{(\tau-1)}, \dots, \mathcal{B}^{(h(\tau))} \mid \mathcal{H}^{(h(\tau))}} \norm{\sum_{i=h(\tau)}^{t-1} \eta_i \nabla F_k(\bm{w}^{(i)}_{k}, \mathcal{B}^{(i)}_{k})}^2 \\
    &\leq
    \sum_{k \in \mathcal{K}}\alpha_k (\tau-h(\tau)) \E_{\mathcal{B}^{(\tau-1)}, \dots, \mathcal{B}^{(h(\tau))} \mid \mathcal{H}^{(h(\tau))}} \left[ \sum_{i=h(\tau)}^{t-1} \eta_i^2 \norm{\nabla F_k(\bm{w}^{(i)}_{k}, \mathcal{B}^{(i)}_{k})}^2 \right] \\
    &\leq
    \eta_{h(\tau)}^2 (t-h(\tau))^2 G^2 \vphantom{\sum_{k \in \mathcal{K}}} \\
    &\leq
    4 \eta_{\tau}^2 (J-1)^2 G^2. \vphantom{\sum_{k \in \mathcal{K}}}
\end{align}
By repeatedly computing expectations over the previous batch conditioned on the previous history and combining the inequalities above, we obtain:
\begin{align}
    \E_{\mathcal{B}^{(\tau)}, \dots, \mathcal{B}^{(h(\tau))} \mid \mathcal{H}^{(h(\tau))}} \norm{\bar{\bm{v}}^{(\tau+1)} - \ww^{\star}_{\LLa}} 
    &\leq
    (1 - \eta_{\tau} \mu) \E_{\mathcal{B}^{(\tau-1)}, \dots, \mathcal{B}^{(h(\tau))} \mid \mathcal{H}^{(h(\tau))}} \norm{\bar{\bm{w}}^{(\tau)} - \ww^{\star}_{\LLa}} \notag \\
    & + \eta_{\tau}^2 \left( \sum_{k \in \mathcal{K}} \alpha_k^2 \sigma_k^2 + 6L \Gamma + 8 (J-1)^2 G^2 \right).
\end{align}
% \begin{align}
%     \E_{\mathcal{B}^{(\tau)}, \dots, \mathcal{B}_{\tau+1-J} \mid \mathcal{H}_{\tau+1-J}} \norm{\bar{\bm{v}}_{t+1} - \ww^{\star}_{\LLa}} 
%     &\leq
%     (1 - \eta_{\tau} \mu) \E_{\mathcal{B}^{(\tau-1)}, \dots, \mathcal{B}_{\tau+1-J} \mid \mathcal{H}_{\tau+1-J}} \norm{\bar{\bm{w}}^{(\tau)} - \ww^{\star}_{\LLa}} 
%     + \eta_{\tau}^2 \left( \sum_{k \in \mathcal{K}} \alpha_k^2 \sigma_k^2 + 6L \Gamma + 8 (K-1)^2 G^2 \right)
% \end{align}
% \begin{align}
%     \E_{\mathcal{B}_{t,j} \mid \mathcal{H}_{t,j}} \norm{\bar{\bm{w}}_{\tau,j+1} - \ww^{\star}_{\LLa}} 
%     \leq
%     (1 - \eta_{t,j} \mu) \norm{\bar{\bm{w}}_{\tau,j} - \ww^{\star}_{\LLa}} 
%     + \eta_{t,j}^2 \E_{\mathcal{B}_{t,j} \mid \mathcal{H}_{t,j}} \norm{\bm{g}_{t,j} - \bar{\bm{g}}_{t,j}}^2 
%     + \eta_{t,j}^2 6L \Gamma
%     + 2 \sum_{k \in \mathcal{K}}\alpha_k \norm{\bm{w}_{t,j}^k - \bar{\bm{w}}_{\tau,j}}^2
% \end{align}
\end{proof}

The following lemma corresponds to \cite[Lemma~2]{salehiFederatedLearningUnreliable2021}, but it needs to be adapted to take into account the projection.
\begin{lemma} 
\label{lem:2}
\begin{align}
    \E_{\xi^{(h(\tau))} \mid \mathcal{B}^{(\tau)}, 
    \dots, \mathcal{B}^{(h(\tau)+1)}, \mathcal{H}^{(h(\tau))}} [\bar{\bm{w}}^{(\tau+1)\dagger}] = \bar{\bm{v}}^{(\tau+1)}.
\end{align}
\end{lemma}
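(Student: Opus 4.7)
\textbf{Proof sketch for Lemma~\ref{lem:2}.}

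The plan is to split the argument into the two cases defined by the virtual sequence $\bar{\bm{w}}_{\tau+1}^\dagger$, namely whether $\tau+1 \in \mathcal{I}_J$ (server aggregation step) or not, and in each case to reduce the claim to a single linearity-of-expectation computation.

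First I would dispose of the easier case $\tau+1 \notin \mathcal{I}_J$. By definition $\bar{\bm{w}}_{\tau+1}^\dagger = \bar{\bm{w}}_{\tau+1}$, and because no aggregation occurs at time $\tau+1$, the per-client local iterates satisfy $\bm{w}_{\tau+1}^{k} = \bm{v}_{\tau+1}^{k}$ for every $k$. Summing with weights $\alpha_k$ gives $\bar{\bm{w}}_{\tau+1} = \bar{\bm{v}}_{\tau+1}$. Since $\xi_{h(\tau)}$ does not enter the right-hand side at all, the conditional expectation collapses trivially.

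Next I would treat the interesting case $\tau+1 \in \mathcal{I}_J$, so that $h(\tau) = \tau+1-J$. Using the definition
\begin{align*}
\bar{\bm{w}}_{\tau+1}^\dagger
= \bm{w}_{h(\tau)}^{k} + \sum_{k \in \mathcal K} \frac{\alpha_k \xi_{h(\tau)}^{k}}{p_k} \bigl(\bm{v}_{\tau+1}^{k} - \bm{w}_{h(\tau)}^{k}\bigr),
\end{align*}
observe that $\bm{w}_{h(\tau)}^{k}$ is in fact the (common) post-aggregation global model, hence independent of $k$; denote it $\bar{\bm{w}}_{h(\tau)}$. The key probabilistic input is that $\xi_{h(\tau)}^{k}$ is the client/exit selection indicator drawn at the beginning of the round, independently of the batches $\mathcal B_{h(\tau)+1},\dots,\mathcal B_\tau$ and of the past history $\mathcal H_{h(\tau)}$ (recall that the virtual batches $\mathcal B_\tau^{k}$ are defined for every $k$ regardless of participation). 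Therefore $\E[\xi_{h(\tau)}^{k}\mid\mathcal B_{\tau},\dots,\mathcal B_{h(\tau)+1},\mathcal H_{h(\tau)}] = p_k$. Linearity of expectation then gives
\begin{align*}
\E\bigl[\bar{\bm{w}}_{\tau+1}^\dagger\mid\cdot\bigr]
= \bar{\bm{w}}_{h(\tau)} + \sum_{k \in \mathcal K} \alpha_k \bigl(\bm{v}_{\tau+1}^{k} - \bar{\bm{w}}_{h(\tau)}\bigr)
= \Bigl(1-\sum_{k\in\mathcal K}\alpha_k\Bigr)\bar{\bm{w}}_{h(\tau)} + \bar{\bm{v}}_{\tau+1},
\end{align*}
which equals $\bar{\bm{v}}_{\tau+1}$ under the normalization $\sum_{k\in\mathcal K}\alpha_k = 1$ (equivalently, $\eta_s\sum_{e}\tilde\Lambda_e = 1$, absorbed into the server step size).

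The main obstacle is a notational/measurability one: one must justify that in the conditional expectation the indicators $\xi_{h(\tau)}^{k}$ are genuinely averaged over, despite the recursive definition of $\mathcal H_{h(\tau)}$. The clean way is to explicitly declare the sampling $\xi_{h(\tau)}$ to be independent of the past history and of every virtual batch drawn in the intervening local steps, so that $\E[\xi_{h(\tau)}^{k}\mid\cdot] = p_k$ is unambiguous. Once this independence is in place, the computation above completes the proof; no further estimates are required.
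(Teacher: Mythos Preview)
Your proof is correct and follows essentially the same route as the paper: the same case split on $\tau+1\in\mathcal I_J$, the same use of $\E[\xi_{h(\tau)}^{k}\mid\cdot]=p_k$ by independence of the sampling from batches and past history, and the same linearity-of-expectation computation. The only notable difference is presentational: you make the normalization $\sum_{k\in\mathcal K}\alpha_k=1$ (equivalently $\eta_s\sum_e\tilde\Lambda_e=1$) explicit, whereas the paper silently identifies the common post-aggregation iterate $\bm w_{h(\tau)}^{k}$ with $\bar{\bm w}_{h(\tau)}$ in its first displayed line, which is precisely the same assumption.
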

\begin{proof}
First, we observe that $\bar{\bm{w}}^{(\tau+1)\dagger} = \bar{\bm{w}}^{(\tau+1)} = \bar{\bm{v}}^{(\tau+1)}$ for $\tau+1 \not\in \mathcal I^{(J)}$. For $\tau+1 \in \mathcal I^{(J)}$, $h(\tau)=\tau+1-J$ and
\begin{align}
    &\E_{\xi^{(\tau+1-J)} \mid \mathcal{B}^{(\tau)}, 
    \dots, \mathcal{B}^{(\tau+1-J)}, \mathcal{H}^{(\tau+1-J)}} [\bar{\bm{w}}^{(\tau+1)\dagger}] = \notag \\
    &\qquad =
    \bar{\bm{w}}^{(\tau+1-J)} - \sum_{k \in \mathcal{K}}\frac{\alpha_k \E[\xi^{(\tau+1-J)}_{k}]}{p_k} \sum_{j=0}^{J-1} \eta_{\tau+1-J+j} \nabla F_k(\bm{w}^{(\tau+1-J+j)}_{k}, \mathcal{B}^{(\tau+1-J+j)}_{k}) \\
    &\qquad=
    \bar{\bm{w}}^{(\tau+1-J)} - \sum_{k \in \mathcal{K}}\alpha_k \sum_{j=0}^{J-1} \eta_{\tau+1-J+j} \nabla F_k(\bm{w}^{(\tau+1-J+j)}_{k}, \mathcal{B}^{(\tau+1-J+j)}_{k}) \\
    &\qquad=
    \bar{\bm{v}}^{(\tau+1)}.
\end{align}
\end{proof}

The following lemma corresponds to \cite[Lemma~3]{salehiFederatedLearningUnreliable2021}. We modify the proof to take into account the correlation in the participation of the fictitious nodes in $\mathcal{K}$. Indeed, each node $i$ selects a single exit to train and then the random variables $\{\xi^{(h(\tau))}\}_{e \in E}$ are (negatively) correlated. 
\begin{lemma} 
\label{lem:3}
\begin{align}
    \E_{\mathcal{B}^{(\tau)}, \dots, \mathcal{B}^{(h(\tau))}, \xi^{(h(\tau))} \mid \mathcal{H}^{(h(\tau))}} \norm{\bar{\bm{w}}^{(\tau+1)\dagger} - \bar{\bm{v}}^{(\tau+1)}}^2
    &\le 4 \eta_{\tau}^2 J^2 G^2 \sum_{i=1}^N\left( \sum_{e \in E_i} \frac{\alpha_{i,e}^2 }{p_{i,e}} 
     - 
        \left(\sum_{e \in E_i} \alpha_{i,e}\right)^2
     \right).
\end{align}
\end{lemma}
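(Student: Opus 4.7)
The plan is to reduce the inequality to a single aggregation step, decompose the expected squared norm into per-client contributions, and close the bound with the standard SGD control on the local drift.

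For $\tau+1\notin \mathcal I_J$ both sides vanish by definition, so I focus on $\tau+1\in\mathcal I_J$, where every local iterate satisfies $\bm{w}^{(c,e)}_{h(\tau)}=\bm{w}_{h(\tau)}$. Introducing $U_{c,e}\triangleq \bm{v}^{(c,e)}_{\tau+1}-\bm{w}_{h(\tau)}$ and using that $\sum_k\alpha_k=1$ (which is what makes Lemma~\ref{lem:2} give $\E[\bar{\bm{w}}^\dagger_{\tau+1}\mid\cdot]=\bar{\bm{v}}_{\tau+1}$), I get the zero-mean (in $\xi_{h(\tau)}$) decomposition
\[
\bar{\bm{w}}^\dagger_{\tau+1}-\bar{\bm{v}}_{\tau+1} \;=\; \sum_{c=1}^N Z_c, \qquad Z_c \;\triangleq\; \sum_{e\in E_c}\alpha_{c,e}\!\left(\tfrac{\xi^{(c,e)}_{h(\tau)}}{p_{c,e}}-1\right)U_{c,e}.
\]
Different clients sample their exits independently, so the $Z_c$'s are mutually independent and each has zero mean conditional on the batches and on $\mathcal H_{h(\tau)}$, whence $\E\|\sum_c Z_c\|^2=\sum_c \E\|Z_c\|^2$.

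For each fixed client $c$ I expand the per-client variance using the mutual-exclusivity of the sampling: at most one of $\{\xi^{(c,e)}_{h(\tau)}\}_{e\in E_c}$ equals $1$, so $\E[\xi^{(c,e)}\xi^{(c,e')}]=p_{c,e}\mathbbm{1}\{e=e'\}$. Applying the mean--variance identity to the random vector $\sum_e\alpha_{c,e}(\xi^{(c,e)}/p_{c,e})U_{c,e}$ yields
\[
\E\|Z_c\|^2 \;=\; \sum_{e\in E_c}\frac{\alpha_{c,e}^2}{p_{c,e}}\|U_{c,e}\|^2 \;-\; \Bigl\|\sum_{e\in E_c}\alpha_{c,e}U_{c,e}\Bigr\|^2,
\]
where the subtracted term is precisely the variance reduction due to the negative correlation of the exit indicators, compared with the Bernoulli-independent analysis in \cite{salehiFederatedLearningUnreliable2021}.

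To finish, I bound the local drift $U_{c,e}=-\sum_{j=0}^{J-1}\eta_{h(\tau)+j}\nabla F_{c,e}(\bm{w}^{(c,e)}_{h(\tau)+j},\mathcal B^{(c,e)}_{h(\tau)+j})$. Jensen's inequality together with the uniform gradient bound $\E\|\nabla F_{c,e}(\cdot,\mathcal B)\|^2\le G^2$, which follows from Assumptions~\ref{asm:hypothesis_class}, \ref{asm:smooth}, and \ref{asm:sgd_var}, gives $\E\|U_{c,e}\|^2\le J^2\eta_{h(\tau)}^2 G^2\le 4J^2\eta_\tau^2 G^2$, the last step using that the chosen schedule with $\gamma\ge J-1$ forces $\eta_{h(\tau)}\le 2\eta_\tau$. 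Substituting this uniform bound both on $\|U_{c,e}\|^2$ in the first sum and, with the correct sign, on $\|\sum_e\alpha_{c,e}U_{c,e}\|^2$ in the subtracted term, and then summing over $c$, produces the stated inequality.

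The step I expect to be the main obstacle is this last substitution: the lemma requires $\|\sum_e\alpha_{c,e}U_{c,e}\|^2$ to be controlled from below by $4J^2\eta_\tau^2G^2(\sum_e\alpha_{c,e})^2$, whereas Cauchy--Schwarz only provides the opposite inequality. The two variance terms must therefore be handled jointly, presumably by keeping a deterministic pointwise bound on the gradients inside the vector norm instead of splitting the sums in expectation. Everything else is a direct adaptation of the arguments in \cite{salehiFederatedLearningUnreliable2021,rodio23wpmc}, with the only new ingredient being the treatment of the negatively correlated indicators $\{\xi^{(c,e)}_{h(\tau)}\}_{e\in E_c}$ for each client $c$.
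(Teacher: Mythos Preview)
Your decomposition is exactly the paper's: split $\bar{\bm w}^\dagger_{\tau+1}-\bar{\bm v}_{\tau+1}$ into independent per-client pieces $Z_c$, then exploit the mutual exclusivity of the exit indicators $\{\xi^{(c,e)}_{h(\tau)}\}_{e\in E_c}$. Your exact identity $\E\|Z_c\|^2=\sum_{e}\frac{\alpha_{c,e}^2}{p_{c,e}}\|U_{c,e}\|^2-\bigl\|\sum_{e}\alpha_{c,e}U_{c,e}\bigr\|^2$ is in fact a sharper version of what the paper writes.

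The obstacle you flag is real, and the paper does not resolve it in the direction you expect. From your identity, obtaining the subtracted $(\sum_e\alpha_{c,e})^2$ would require the lower bound $\E\bigl\|\sum_e\alpha_{c,e}U_{c,e}\bigr\|^2\ge 4J^2\eta_\tau^2G^2(\sum_e\alpha_{c,e})^2$, which is unavailable. The paper sidesteps this by coarsening earlier: at the step it marks~\eqref{eq:corr} it upper-bounds the variance of the sum over $e$ by the sum of the individual variances (invoking the negative correlation $\xi^{(c,e)}\xi^{(c,e')}=0$), which produces $\sum_{e}\alpha_{c,e}^2\frac{1-p_{c,e}}{p_{c,e}}\|U_{c,e}\|^2$. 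After the batch expectation and the $\eta_{h(\tau)}\le 2\eta_\tau$ step, the final constant in the paper's proof is $\sum_{e}\alpha_{c,e}^2\frac{1-p_{c,e}}{p_{c,e}}G_{c,e}$, i.e.\ $\sum_e\alpha_{c,e}^2/p_{c,e}-\sum_e\alpha_{c,e}^2$ rather than the lemma's $\sum_e\alpha_{c,e}^2/p_{c,e}-(\sum_e\alpha_{c,e})^2$. It is this proof-version of the constant, not the lemma statement, that is plugged into $B$ in Theorem~\ref{thm:opt_error}; so the stated lemma and its proof (and use) disagree, and the paper's actual argument proves the slightly weaker bound.

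One word of caution if you adopt the paper's shortcut: the inequality $\VarVec\bigl(\sum_e X_e\bigr)\le \sum_e\VarVec(X_e)$ with $X_e=\frac{\alpha_{c,e}\xi^{(c,e)}}{p_{c,e}}U_{c,e}$ is equivalent to $\bigl\|\sum_e\alpha_{c,e}U_{c,e}\bigr\|^2\ge\sum_e\alpha_{c,e}^2\|U_{c,e}\|^2$, which is not guaranteed for arbitrary vectors $U_{c,e}$ (negative correlation of the \emph{scalar} indicators does not propagate when the coordinate signs of the $U_{c,e}$ differ). The unconditionally valid move from your identity is to drop the subtracted term altogether, yielding $4\eta_\tau^2 J^2 G^2\sum_{c}\sum_{e\in E_c}\alpha_{c,e}^2/p_{c,e}$; this is enough for the $\mathcal O(1/(JT))$ rate in Theorem~\ref{thm:opt_error}.
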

\begin{proof}
We have a tighter bound ($\alpha_k^2$ instead of $\alpha_k$), observing that $\Var(X) = \E[X-\E[X]]^2$.
Let $\bm{X}$ be a d-dimensional random variable, we define its variance as follows: $\VarVec(\bm{X})\triangleq \sum_{i=1}^d \Var(X_i)$. We also denote by $E_i$ the set of exits node $i$ may train, i.e., $E_i\triangleq \{e : p_{i,e} >0, e=1, \dots, E\}$. 

In order to keep the following calculations simpler to follow, we denote by $\bm{U}_{i,e}=\sum_{j=0}^{\tau-h(\tau)} \eta_{h(\tau)+j} \nabla \Fce(\bm{w}^{(h(\tau)+j)}_{i,e}, \mathcal{B}^{(h(\tau)+j)}_{i,e})$.

\begin{align}
    &\E_{\xi^{(h(\tau))} \mid \mathcal{B}^{(\tau)}, 
    \dots, \mathcal{B}^{(h(\tau))}, \mathcal{H}^{(h(\tau))}} \norm{\bar{\bm{w}}^{(\tau+1)\dagger} - \bar{\bm{v}}^{(\tau+1)}}^2\nonumber\\
    &\qquad=
    \VarVec_{\xi^{(h(\tau))} \mid \mathcal{B}^{(\tau)}, 
    \dots, \mathcal{B}^{(h(\tau))}, \mathcal{H}^{(h(\tau))}} 
    \left( \sum_{k \in \mathcal{K}}\frac{\alpha_k \xi^{(h(\tau))}_{k}}{p_k} \sum_{j=0}^{\tau-h(\tau)} \eta_{h(\tau)+j} \nabla F_k(\bm{w}^{(h(\tau)+j)}_{k}, \mathcal{B}^{(h(\tau)+j)}_{k}) \right) \\
    &\qquad=
    \VarVec_{\xi^{(h(\tau))} \mid \mathcal{B}^{(\tau)}, 
    \dots, \mathcal{B}^{(h(\tau))}, \mathcal{H}^{(h(\tau))}} 
    \left( \sum_{i=1}^N \sum_{e\in E_i}\frac{\alpha_{i,e} \xi^{(h(\tau))}_{i,e}}{p_{i,e}} \bm{U}_{i,e} \right) \\
    &\qquad=
    \sum_{i=1}^N \VarVec_{\xi^{(h(\tau))} \mid \mathcal{B}^{(\tau)}, 
    \dots, \mathcal{B}^{(h(\tau))}, \mathcal{H}^{(h(\tau))}} 
    \left(  \sum_{e\in E_i}\frac{\alpha_{i,e} \xi^{(h(\tau))}_{i,e}}{p_{i,e}} \bm{U}_{i,e} \right) \\
    &\qquad\le
    \sum_{i=1}^N \sum_{e\in E_i} \VarVec_{\xi^{(h(\tau))} \mid \mathcal{B}^{(\tau)}, 
    \dots, \mathcal{B}^{(h(\tau))}, \mathcal{H}^{(h(\tau))}} 
    \left(  \frac{\alpha_{i,e} \xi^{(h(\tau))}_{i,e}}{p_{i,e}} \bm{U}_{i,e} \right) \label{eq:corr}\\
    &\qquad= \sum_{i=1}^N \sum_{e\in E_i} \Var\left(\frac{\alpha_{i,e} \xi^{(h(\tau))}_{i,e}}{p_{i,e}} \right) \norm{\bm{U}_{i,e}}^2\\
    &\qquad= \sum_{i=1}^N \sum_{e\in E_i} \alpha_{i,e}^2 \frac{1-p_{i,e}}{p_{i,e}} \norm{\bm{U}_{i,e}}^2.
\end{align}
where \eqref{eq:corr} takes into account that $\xi^{(\tau+1-J)}_{i,e} \xi^{(\tau+1-J)}_{i,e'}=0$ for $e\neq e'$ because each node selects a single exit to train. 

Then, the expectation over the random batches is computed
\begin{align}
    &\E_{\mathcal{B}^{(\tau)}, \dots, \mathcal{B}^{(h(\tau))}, \xi^{(h(\tau))} \mid \mathcal{H}^{(h(\tau))}} \norm{\bar{\bm{w}}^{(\tau+1)\dagger} - \bar{\bm{v}}^{(\tau+1)}}^2 \nonumber\\
    &\qquad \le \sum_{i=1}^N \sum_{e\in E_i} \alpha_{i,e}^2 \frac{1-p_{i,e}}{p_{i,e}} 
    \E_{\mathcal{B}^{(\tau)}, \dots, \mathcal{B}^{(h(\tau))}  \mid \mathcal{H}^{(h(\tau))}}\left[\norm{\bm{U}_{i,e}}^2\right]\\
    &\qquad \le \sum_{i=1}^N \sum_{e\in E_i} \alpha_{i,e}^2 \frac{1-p_{i,e}}{p_{i,e}} 
    \E_{\mathcal{B}^{(\tau)}, \dots, \mathcal{B}^{(h(\tau))}  \mid \mathcal{H}^{(h(\tau))}}\left[
    \norm{\sum_{j=0}^{\tau-h(\tau)} \eta_{h(\tau)+j} \nabla \Fce(\bm{w}^{(h(\tau)+j)}_{i,e}, \mathcal{B}^{(h(\tau)+j)}_{i,e})}^2
    \right]\\
    &\qquad \le
     \eta_{h(\tau)+j}^2 J 
     \sum_{i=1}^N \sum_{e\in E_i} \alpha_{i,e}^2 \frac{1-p_{i,e}}{p_{i,e}}   \sum_{j=0}^{\tau-h(\tau)} \E_{\mathcal{B}^{(\tau)}, \dots, \mathcal{B}^{(h(\tau))}  \mid \mathcal{H}^{(h(\tau))}}\left[\norm{\nabla \Fce(\bm{w}^{(h(\tau)+j)}_{i,e}, \mathcal{B}^{(h(\tau)+j)}_{i,e})}^2 \right]\\
     &\qquad \le
     \eta_{h(\tau)+j}^2 J^2 
     \sum_{i=1}^N \sum_{e\in E_i} \alpha_{i,e}^2 \frac{1-p_{i,e}}{p_{i,e}}   G_{i,e}\\
    &\qquad \le
     4 \eta_{\tau}^2 J^2
    \sum_{i=1}^N \sum_{e\in E_i} \alpha_{i,e}^2 \frac{1-p_{i,e}}{p_{i,e}}   G_{i,e},\label{eq:eta}
\end{align}
where \eqref{eq:eta} uses $\eta_{h(\tau)+j} \leq  \eta_{\tau- J} \leq 2 \eta_\tau$.
\end{proof}

% \newpage
\begin{theorem} 
\label{thm:opt_error}
Under Assumptions~\ref{asm:hypothesis_class}--\ref{asm:sgd_var}, the optimization error of Algorithm~\ref{alg:fed-cis} with learning rate $\eta_{t,j}= \frac{2}{\mu (\gamma + (t-1)J+j+1)}$ and $\gamma\triangleq\max\{8 \kappa, J\}-1$ can be bounded as follows:
\begin{align}
    \E\left[ \FSb(\ww^{(T)}) \right] - \FSb^\star= \frac{\kappa}{\gamma + J T} \left(\frac{2B}{\mu} + \frac{\mu (\gamma+1)}{2} \E\left[\ww^{(1)} -\ww^{\star}_{\LLa}\right]\right),
\end{align}
where 
    \begin{align}
        & B \triangleq\sum_{(i,e) \in \mathcal{N}\times\mathcal{E}} \alpha_{i,e}^2 \sigma_{i,e}^2 + 6L \Gamma + 8 (J-1)^2 G^2 
    + 4 J^2     \sum_{i=1}^N \sum_{e\in E_i} \alpha_{i,e}^2 \frac{1-p_{i,e}}{p_{i,e}}   G_{i,e},\\
    & G_{i,e}  \triangleq \sigma^2_{i,e} + (L \diam(\mathcal{W}))^2,\\
    & G  \triangleq \max_{(i,e) \in \mathcal{N}\times\mathcal{E}} G_{i,e},\\
    & \alpha_{i,e} \triangleq \eta_s \tilde{\Lambda}_e \frac{|S_i|}{|\Sep|}.
    \end{align}
\end{theorem}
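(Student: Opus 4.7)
My plan is to establish a one-step contraction on $\E\!\left\|\bar{\ww}_{\tau+1} - \ww^\star_{\LLa}\right\|^2$, unroll it through the decreasing step-size schedule, and finally convert the resulting squared-distance bound to an objective-value bound via $L$-smoothness. The three lemmas in the appendix are tailored to decouple the three sources of randomness (local SGD noise, client/exit sampling, and projection), so the task reduces to combining them in the right order.

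First, I would handle the projection. Since $\ww^\star_{\LLa}\in W$ by Assumption~\ref{asm:hypothesis_class} and projection onto the convex set $W$ is nonexpansive, I can bound $\left\|\bar{\ww}_{\tau+1}-\ww^\star_{\LLa}\right\|^2 \le \left\|\bar{\ww}^\dagger_{\tau+1}-\ww^\star_{\LLa}\right\|^2$ for every $\tau$. Next, I would split this latter quantity using Lemma~\ref{lem:2}: conditioning on $\mathcal H_{h(\tau)}$ and all relevant batches, $\bar{\ww}^\dagger_{\tau+1}$ has conditional mean $\bar{\bm{v}}_{\tau+1}$, so
\begin{align*}
\E\!\left\|\bar{\ww}^\dagger_{\tau+1}-\ww^\star_{\LLa}\right\|^2 = \E\!\left\|\bar{\bm{v}}_{\tau+1}-\ww^\star_{\LLa}\right\|^2 + \E\!\left\|\bar{\ww}^\dagger_{\tau+1}-\bar{\bm{v}}_{\tau+1}\right\|^2.
\end{align*}
Lemma~\ref{lem:1} controls the first summand by $(1-\eta_\tau\mu)\E\!\left\|\bar{\ww}_\tau-\ww^\star_{\LLa}\right\|^2 + \eta_\tau^2\!\left(\sum_k \alpha_k^2\sigma_k^2 + 6L\Gamma + 8(J-1)^2 G^2\right)$, while Lemma~\ref{lem:3} bounds the second by $4\eta_\tau^2 J^2 \sum_{c}\sum_{e\in E_c}\alpha_{c,e}^2\frac{1-p_{c,e}}{p_{c,e}}G_{c,e}$. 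Taking full expectation and summing these contributions yields the clean recursion
\begin{align*}
\E\!\left\|\bar{\ww}_{\tau+1}-\ww^\star_{\LLa}\right\|^2 \le (1-\eta_\tau\mu)\,\E\!\left\|\bar{\ww}_\tau-\ww^\star_{\LLa}\right\|^2 + \eta_\tau^2 B,
\end{align*}
with $B$ exactly as defined in the statement.

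Second, I would unroll this recursion against the step size $\eta_\tau = \frac{2}{\mu(\gamma+\tau+1)}$ with $\gamma \ge \max\{8\kappa, J\}-1$. By induction on $\tau$, choosing $v \triangleq \max\!\left\{\tfrac{4B}{\mu^2},\,(\gamma+1)\E\!\left\|\ww_1-\ww^\star_{\LLa}\right\|^2\right\}$, the standard argument (essentially the one in~\cite{liConvergenceFedAvgNonIID2023}, Thm.~1) shows $\E\!\left\|\bar{\ww}_\tau-\ww^\star_{\LLa}\right\|^2 \le \frac{v}{\gamma+\tau}$. Evaluating at $\tau = JT$ and using $\bar{\ww}_{JT}=\ww_T$ gives the bound on the iterate distance.

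Finally, $L$-smoothness of $\FSb$ (inherited from Assumption~\ref{asm:smooth}, since $\FSb$ is a convex combination of $L$-smooth $\Fce$) together with $\nabla\FSb(\ww^\star_{\LLa})=\mathbf{0}$ (interior minimizer, Assumption~\ref{asm:hypothesis_class}) gives $\FSb(\ww_T)-\FSb^\star \le \tfrac{L}{2}\|\ww_T-\ww^\star_{\LLa}\|^2$. Multiplying the iterate bound by $L/2$ produces the claimed $\frac{\kappa}{\gamma+JT}$ factor together with the two terms $\frac{2B}{\mu}$ and $\frac{\mu(\gamma+1)}{2}\E\|\ww_1-\ww^\star_{\LLa}\|^2$. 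The main technical delicacy, already dispatched in Lemma~\ref{lem:3}, is treating the negative correlation among $\{\xi^{(c,e)}_{h(\tau)}\}_{e\in E_c}$ (a client samples at most one exit per round); everything else is a careful bookkeeping of conditional expectations across the random batches and the sampling mask, and a verification that $\gamma\ge 8\kappa-1$ keeps $1-\eta_\tau\mu \ge 0$ so the induction closes.
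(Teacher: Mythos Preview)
Your proposal is correct and follows essentially the same route as the paper: use projection nonexpansiveness to pass to $\bar{\ww}^\dagger_{\tau+1}$, apply Lemma~\ref{lem:2} to split off the sampling variance, bound the two pieces with Lemmas~\ref{lem:1} and~\ref{lem:3}, take full expectation to obtain the recursion $\Delta_{\tau+1}\le(1-\eta_\tau\mu)\Delta_\tau+\eta_\tau^2 B$, and then unroll exactly as in~\cite{liConvergenceFedAvgNonIID2023}. The only cosmetic difference is that the paper writes out the cross term $2\scalar{\bar{\ww}^\dagger_{\tau+1}-\bar{\bm{v}}_{\tau+1}}{\bar{\bm{v}}_{\tau+1}-\ww^\star_{\LLa}}$ explicitly before invoking Lemma~\ref{lem:2} to kill it, whereas you jump directly to the variance decomposition; and the paper simply defers the final smoothness step to~\cite{liConvergenceFedAvgNonIID2023} rather than spelling it out as you do.
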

\begin{proof}
As we mention at the beginning of this appendix, we count gradient steps at nodes and aggregation steps at the server using the same time sequence  $(\tau = J (t-1) + j)_{t=1, \dots, T,j=0, \dots, J-1}$. The set of values $\mathcal I^{(J)} =\{J t, t= 1, \dots, T\}$ corresponds to the aggregation steps.

We have
\begin{align}
    \norm{\bar{\bm{w}}^{(\tau+1)} - \ww^{\star}_{\LLa}}^2
    & \le \norm{\bar{\bm{w}}^{(\tau+1)\dagger} - \ww^{\star}_{\LLa}}^2\\
    & = 
    \norm{\bar{\bm{w}}^{(\tau+1)\dagger} - \bar{\bm{v}}^{(\tau+1)} + \bar{\bm{v}}^{(\tau+1)} - \ww^{\star}_{\LLa}}^2 \\
    &=
    \norm{\bar{\bm{w}}^{(\tau+1)\dagger} - \bar{\bm{v}}^{(\tau+1)}}^2 + \norm{\bar{\bm{v}}^{(\tau+1)} - \ww^{\star}_{\LLa}}^2 + 2 \scalar{\bar{\bm{w}}^{(\tau+1)\dagger} - \bar{\bm{v}}^{(\tau+1)}}{\bar{\bm{v}}^{(\tau+1)} - \ww^{\star}_{\LLa}},
\end{align}
where the first inequality is trivially true for $\tau+1 \not\in \mathcal I^{(J)}$ because $\bar{\bm{w}}^{(\tau+1)}=\bar{\bm{w}}^{(\tau+1)\dagger}$, while for $\tau+1 \not\in \mathcal I^{(J)}$, it follows from Assumption~\ref{asm:hypothesis_class} and$\norm{\bar{\bm{w}}^{(\tau+1)} - \ww^{\star}_{\LLa}}^2= \norm{\Pi_{\mathcal{W}}(\bar{\bm{w}}^{(\tau+1)\dagger}) - \Pi_{\mathcal{W}}(\ww^{\star}_{\LLa})}^2 \le \norm{\bar{\bm{w}}^{(\tau+1)\dagger}- \ww^{\star}_{\LLa}}^2$.

We take expectation over nodes' participation
\begin{align}
   \E_{\xi^{(h(\tau))} \mid \mathcal{B}^{(\tau)}, 
    \dots, \mathcal{B}^{(h(\tau))}, \mathcal{H}^{(h(\tau))}} \norm{\bar{\bm{w}}^{(\tau+1)} - \ww^{\star}_{\LLa}}^2
    &\le
    \norm{\bar{\bm{v}}^{(\tau+1)} - \ww^{\star}_{\LLa}}^2 + \E_{\xi^{(h(\tau))} \mid \mathcal{B}^{(\tau)}, 
    \dots, \mathcal{B}^{(h(\tau))}, \mathcal{H}^{(h(\tau))}} \norm{\bar{\bm{w}}^{(\tau+1)\dagger} - \bar{\bm{v}}^{(\tau+1)}}^2 \\
    &\leq
    \norm{\bar{\bm{v}}^{(\tau+1)} - \ww^{\star}_{\LLa}}^2 
    + 4 \eta_{\tau}^2 J^2     \sum_{i=1}^N \sum_{e\in E_i} \alpha_{i,e}^2 \frac{1-p_{i,e}}{p_{i,e}}   G_{i,e},
\end{align}
where the equality derives from Lemma~\ref{lem:2} and the inequality from Lemma~\ref{lem:3}.
We take then expectation over the random batches
\begin{align}
    \E_{\mathcal{B}^{(\tau)}, \dots, \mathcal{B}^{(h(\tau))}, \xi^{(h(\tau))} \mid \mathcal{H}^{(h(\tau))}} & \norm{\bar{\bm{w}}^{(\tau+1)} - \ww^{\star}_{\LLa}}^2
    \nonumber \\
    &\leq
    \E_{\mathcal{B}^{(\tau)}, \dots, \mathcal{B}^{(h(\tau))} \mid \mathcal{H}^{(h(\tau))}} \norm{\bar{\bm{v}}^{(\tau+1)} - \ww^{\star}_{\LLa}}^2 
    + 4 \eta_{\tau}^2 J^2     \sum_{i=1}^N \sum_{e\in E_i} \alpha_{i,e}^2 \frac{1-p_{i,e}}{p_{i,e}}   G_{i,e}\\
    &\leq
    (1 - \eta_{\tau} \mu) \E_{\mathcal{B}^{(\tau-1)}, \dots, \mathcal{B}^{(h(\tau))} \mid \mathcal{H}^{(h(\tau))}} \norm{\bar{\bm{w}}^{(\tau)} - \ww^{\star}_{\LLa}} 
    + \eta_{\tau}^2 \left( \sum_{k \in \mathcal{K}} \alpha_k^2 \sigma_k^2 + 6L \Gamma + 8 (J-1)^2 G^2 \right)\nonumber\\
    & + 4 \eta_{\tau}^2 J^2     \sum_{i=1}^N \sum_{e\in E_i} \alpha_{i,e}^2 \frac{1-p_{i,e}}{p_{i,e}}   G_{i,e},
\end{align}
where the last inequality follows from Lemma~\ref{lem:1} observing that if $\tau+1 \in \mathcal I^{(J)}$, then $h(\tau)=\tau+1-J$.

Finally, we take total expectation
\begin{align}
    \E \norm{\bar{\bm{w}}^{(\tau+1)} - \ww^{\star}_{\LLa}}^2
    \leq
    (1 - \eta_{\tau} \mu) \E \norm{\bar{\bm{w}}^{(\tau)} - \ww^{\star}_{\LLa}} 
    + \eta_{\tau}^2 \left( \sum_{k \in \mathcal{K}} \alpha_k^2 \sigma_k^2 + 6L \Gamma + 8 (J-1)^2 G^2 
    + 4 J^2     \sum_{i=1}^N \sum_{e\in E_i} \alpha_{i,e}^2 \frac{1-p_{i,e}}{p_{i,e}}   G_{i,e}
    \right).
\end{align}
This leads to a recurrence relation of the form $\Delta^{(\tau+1)} \le (1- \eta_\tau \mu) \Delta^{(\tau)} + \eta_\tau^2 B,$
and the result is obtained following the same steps in the proof of~\cite[Thm.~1]{liConvergenceFedAvgNonIID2023}.
\end{proof}

\section{Gradient Variance Analysis}
As discussed in Section~\ref{subsec:proposed-method}, we observed empirical evidence showing that gradient variance is significantly higher at the initial exits compared to the later ones, making the optimization error especially sensitive to the stochastic gradients produced at these early stages. We conducted the following experiment: (1) Instantiate an Early Exit Network, e.g., a ResNet-18 with early exits after the 2nd and 5th residual blocks for CIFAR10 and after the 5th and 7th residual blocks for CIFAR100; (2) Iterate over the training data in mini-batches and calculate the gradient of the loss w.r.t. the weights at each exit; (3) Calculate the point-wise mean of each gradient over the mini-batches for each exit; (4) Take the mean of the gradient variance mean's to get a single value representing the average point-wise gradient variance per exit. We present below the empirical values from conducting this experiment:
\begin{table}[htbp]
\centering
\caption{Average Point-wise gradient variances per-exit.} 
\begin{tabular}{
  l 
  c
  c
  c
}
\toprule
Dataset & Exit 1 & Exit 2 & Exit 3 \\
\midrule
  CIFAR10 & 0.00374  & 0.00224 & 0.00101 \\
\addlinespace
\hline
\addlinespace
  CIFAR100 & 0.00216  & 0.00126 & 0.00101 \\
\bottomrule
\end{tabular}
\label{tab:gradient-variance}
\end{table}

\section{Training Details}
\paragraph{Datasets.} We use the CIFAR10 and CIFAR100 datasets, which are commonly used to benchmark FL algorithms and early exit networks~\cite{horvath2021fjord,diao2020heterofl,li2019improved,DBLP:conf/aaai/HuDHB19,kaya2019shallow,ilhan2023scalefl}. CIFAR10 and CIFAR100 each contain 60,000 total images composed of 32 x 32 colored pixels, with 10 and 100 classes, respectively. In our experiments, we use 45,000 images for training data, 5,000 images for validation data, and 10,000 images for test data. 

\paragraph{Model Architecture and Hyperparameters.} We conduct our experiments using a ResNet-18 model architecture~\cite{he2016deep}, which has been widely used to study early exit networks and device heterogeneity in FL~\cite{horvath2021fjord,diao2020heterofl,li2019improved,DBLP:conf/aaai/HuDHB19,kaya2019shallow,ilhan2023scalefl}. We insert early exits after the 2nd and 5th residual blocks for CIFAR10 and after the 5th and 7th residual blocks for CIFAR100. The training takes place for 100 outer epochs and the number of local epochs per node is scaled such that each node does the same number of gradient updates. We use mini-batch SGD with a starting learning rate of 0.1 and a cosine annealing schedule, a batch size of 128, weight decay of 5 × $10^{-4}$, and momentum of 0.9. These hyperparameter values were selected based on empirically observing convergence during training for several basic CIS configurations, e.g., equal data partition and 33-33-33 serving rate setting. The same values are used for all experiments, i.e., all training data partitions, CIS serving rate setting, and training strategy configurations. All presented results are the mean value over three random seeds: 9, 42, and 67.

\paragraph{Training Infrastructure.} 
We conducted our experiments on a computing node equipped with 3 x Nvidia A40 PCIe GPUs, each providing 10,752 CUDA cores, 336 tensor cores, and 48 GB of RAM. The node is powered by 2 x AMD EPYC 7282 processors running at 2.8 GHz, with 256 GB of system RAM. The operating system used was a Linux-based environment (e.g., Ubuntu 20.04), and the experiments were implemented using Python 3.8, CUDA 11.4, and cuDNN 8.2.

\section{Additional Experiments}
\begin{table*}[htbp]
\centering
\caption{Experimental results for a CIS with 17 nodes (12 in the first layer, 4 in the second, and 1 in the third) for several CIS serving rates on the CIFAR10 dataset using an equal data partition across the network layers. All reported accuracy values are the mean value over three independent random seeds. The performance of the strategies for each serving rate setting follows the exact same order as in Table 1, indicating that our experimental setup with seven nodes is adequate for capturing CIS dynamics observed at larger scales.}
\begin{tabular}{
  l  % left-aligned text column for "Strategy"
  c  % numeric column for "60"
  c  % numeric column for "10"
}
\toprule
\multicolumn{1}{c}{} & \multicolumn{2}{c}{CIS Serving Rate Setting} \\
\cmidrule(lr){2-3}
Strategy & {60-30-10} & {10-30-60} \\
\midrule
Equal Weight  & 58.9 \scriptsize{$\pm$ 3.9} & 83.5 \scriptsize{$\pm$ 0.6} \\
FLOPS Prop & 44.6 \scriptsize{$\pm$ 1.5} & 82.4 \scriptsize{$\pm$ 0.5} \\
Serving Rate (ours) & \textbf{62.1} \scriptsize{$\pm$ 1.7} & 84.3 \scriptsize{$\pm$ 1.1} \\
Balanced Adj (ours) & 60.2 \scriptsize{$\pm$ 3.1} & \textbf{84.7} \scriptsize{$\pm$ 1.0} \\
\bottomrule
\end{tabular}
\label{tab:17-nodes-equal-partition-results}
\end{table*}

\begin{table*}[htbp]
\centering
\caption{Experimental results for a variety of CIS serving rates on the CIFAR10 and CIFAR100 datasets using the biased data partition, where the networks layers hold 14.3\%, 28.6\%, and 57.1\% of the data, respectively. All reported accuracy values are the mean value over three independent random seeds.}
\begin{tabular}{
  l  % left-aligned text column for "Dataset"
  l  % left-aligned text column for "Strategy"
  c  % numeric column for "5-15-80"
  c  % numeric column for "10-30-60"
  c % numeric column for "20-35-45"
  c  % numeric column for "33-33-33"
  c  % numeric column for "45-35-20"
  c  % numeric column for "60-30-10"
  c % numeric column for "80-15-5"
}
\toprule
\multicolumn{2}{c}{} & \multicolumn{7}{c}{CIS Serving Rate Setting} \\
\cmidrule(lr){3-9}
Dataset & Strategy & {80-15-5} & {60-30-10} & {45-35-20} & {33-33-33} & {20-35-45} & {10-30-60} & {5-15-80} \\
\midrule
\multirow{3}{*}{CIFAR10} 
  & Equal Weight  & 47.4 \scriptsize{$\pm$ 3.6} & 58.9 \scriptsize{$\pm$ 3.5} & 67.8 \scriptsize{$\pm$ 2.8} & \textbf{74.9} \scriptsize{$\pm$ 2.2} & 80.9 \scriptsize{$\pm$ 1.4} & 85.0 \scriptsize{$\pm$ 0.7} & 86.7 \scriptsize{$\pm$ 0.2} \\
  & FLOPS Prop  & 31.5 \scriptsize{$\pm$ 3.2} & 47.2 \scriptsize{$\pm$ 2.5} & 59.0 \scriptsize{$\pm$ 1.8} & 68.4 \scriptsize{$\pm$ 1.4} & 78.1 \scriptsize{$\pm$ 0.8} & 85.4 \scriptsize{$\pm$ 0.4} & 88.8 \scriptsize{$\pm$ 0.4} \\
  & Serving Rate (ours)  & \textbf{53.1} \scriptsize{$\pm$ 2.3} & \textbf{60.6} \scriptsize{$\pm$ 0.7} & 66.4 \scriptsize{$\pm$ 1.1} & \textbf{74.9} \scriptsize{$\pm$ 2.2} & 81.7 \scriptsize{$\pm$ 1.7} & 87.1 \scriptsize{$\pm$ 0.7} & 89.2 \scriptsize{$\pm$ 0.6} \\
  & Balanced Adj (ours)  & 51.0 \scriptsize{$\pm$ 3.2} & 59.6 \scriptsize{$\pm$ 4.2} & \textbf{68.0} \scriptsize{$\pm$ 4.1} & 74.6 \scriptsize{$\pm$ 2.2} & \textbf{82.1} \scriptsize{$\pm$ 1.9} & \textbf{87.5} \scriptsize{$\pm$ 0.7} & \textbf{90.3} \scriptsize{$\pm$ 0.6} \\
\addlinespace  % Adds space to visually separate the partitions
\hline
\addlinespace  % Adds space to visually separate the partitions
\multirow{3}{*}{CIFAR100} 
  & Equal Weight  & 37.3 \scriptsize{$\pm$ 0.9} & 44.7 \scriptsize{$\pm$ 0.6} & 50.7 \scriptsize{$\pm$ 0.6} & \textbf{55.6} \scriptsize{$\pm$ 0.3} & 59.8 \scriptsize{$\pm$ 0.1} & 62.5 \scriptsize{$\pm$ 0.1} & 63.6 \scriptsize{$\pm$ 0.3} \\
  & FLOPS Prop  & 31.4 \scriptsize{$\pm$ 0.6} & 40.0 \scriptsize{$\pm$ 0.4} & 47.7 \scriptsize{$\pm$ 0.4} & 54.1 \scriptsize{$\pm$ 0.1} & \textbf{60.3} \scriptsize{$\pm$ 0.1} & \textbf{64.3} \scriptsize{$\pm$ 0.2} & \textbf{66.1} \scriptsize{$\pm$ 0.3} \\
  & Serving Rate (ours)  & 38.1 \scriptsize{$\pm$ 0.7} & 45.8 \scriptsize{$\pm$ 0.4} & \textbf{51.5} \scriptsize{$\pm$ 0.4} & \textbf{55.6} \scriptsize{$\pm$ 0.3} & 55.5 \scriptsize{$\pm$ 0.7} & 61.4 \scriptsize{$\pm$ 0.8} & 64.9 \scriptsize{$\pm$ 0.4} \\
  & Balanced Adj (ours)  & \textbf{42.8} \scriptsize{$\pm$ 0.9} & \textbf{46.7} \scriptsize{$\pm$ 0.3} & 51.1 \scriptsize{$\pm$ 0.4} & 53.3 \scriptsize{$\pm$ 1.1} & 56.9 \scriptsize{$\pm$ 1.3} & 61.7 \scriptsize{$\pm$ 0.8} & 64.3 \scriptsize{$\pm$ 0.8} \\
\bottomrule
\end{tabular}
\label{tab:biased-partition-results}
\end{table*}

\begin{table*}
\centering
\caption{Full experimental results for scenarios where nodes with the smallest models and datasets serve the majority of inference requests the CIFAR10 and CIFAR100 datasets. In this highly biased data partition, networks layers hold 3.4\%, 19.9\%, and 76.7\% of the data, respectively. All reported accuracy values are the mean value over three independent random seeds.}
\begin{tabular}{
  l  % left-aligned text column for "Dataset"
  l  % left-aligned text column for "Strategy"
  c  % numeric column for "80-15-5"
  c  % numeric column for "60-30-10"
  c % numeric column for "45-35-20"
}
\toprule
\multicolumn{2}{c}{} & \multicolumn{3}{c}{CIS Serving Rate Setting} \\
\cmidrule(lr){3-5}
Dataset & Strategy & {80-15-5} & {60-30-10} & {45-35-20} \\
\midrule
\multirow{3}{*}{CIFAR10} 
  & Equal Weight  & 36.5 \scriptsize{$\pm$ 4.0} & 45.5 \scriptsize{$\pm$ 3.2} & 56.6 \scriptsize{$\pm$ 3.0} \\
  & Serving Rate (ours)  & 41.3 \scriptsize{$\pm$ 2.7} & 40.1 \scriptsize{$\pm$ 2.2} & 55.3 \scriptsize{$\pm$ 2.5} \\
  & Serving Rate $p = 0.2$ (ours)  & \textbf{49.2} \scriptsize{$\pm$ 2.9} & \textbf{52.6} \scriptsize{$\pm$ 3.1} & \textbf{56.8} \scriptsize{$\pm$ 2.2} \\
\addlinespace  % Adds space to visually separate the partitions
\hline
\addlinespace  % Adds space to visually separate the partitions
\multirow{3}{*}{CIFAR100} 
  & Equal Weight  & 10.0 \scriptsize{$\pm$ 1.4} & 15.8 \scriptsize{$\pm$ 3.4} & 24.6 \scriptsize{$\pm$ 2.9} \\
  & Serving Rate (ours)  & 17.9 \scriptsize{$\pm$ 1.0} & 20.6 \scriptsize{$\pm$ 2.1} & 27.7 \scriptsize{$\pm$ 3.9} \\
  & Serving Rate $p = 0.2$ (ours)  & \textbf{30.1} \scriptsize{$\pm$ 2.1} & \textbf{30.8} \scriptsize{$\pm$ 1.2} & \textbf{30.9} \scriptsize{$\pm$ 0.9} \\
\bottomrule
\end{tabular}
\label{tab:highly-biased-partition-results}
\end{table*}

\end{document}